\def\BibTeX{{\rm B\kern-.05em{\sc i\kern-.025em b}\kern-.08em T\kern-.1667em\lower.7ex\hbox{E}\kern-.125emX}}
\theoremstyle{plain}
\theoremstyle{definition}
\newtheorem{theorem}{Theorem}
\newtheorem{claim}[theorem]{Claim}
\newtheorem{prop}{Proposition}
\newtheorem{remark}{\hspace{0pt}\bf Remark}
\newtheorem{assumption}{\textit{Assumption}}
\def \vec {\operatorname{vec}}
\def \vechh {\operatorname{vechh}}
\begin{document}

\title{Learning Time-Varying Graphs from Online Data}

\author{Alberto~Natali,~\IEEEmembership{Student Member,~IEEE,}
       Elvin~Isufi,
       Mario~Coutino,~\IEEEmembership{Member,~IEEE,}
        and~Geert~Leus,~\IEEEmembership{Fellow,~IEEE}
\thanks{A. Natali, E. Isufi and G. Leus are with the Faculty of Electrical Engineering, Mathematics and Computer Science, Delft University of Technology, Delft, 2628 CD, The Netherlands (e-mail: \{a.natali; e.isufi-1; g.j.t.leus\}@tudelft.nl). M. Coutino is with Radar Technology, TNO, The Hague, The Netherlands (e-mail: m.a.coutinominguez@tudelft.nl).}
\thanks{Part of this paper was presented in~\cite{natali2021online}.}}

\IEEEtitleabstractindextext{\begin{abstract}This work proposes an algorithmic framework to learn time-varying graphs from online data. The generality offered by the framework renders it model-independent, i.e., it can be theoretically analyzed in its abstract formulation and then instantiated under a variety of model-dependent graph learning problems. This is possible by phrasing  (time-varying) graph learning as a composite optimization problem, where different functions regulate different desiderata, e.g., data fidelity, sparsity or smoothness. Instrumental for the findings is recognizing that the dependence of the majority (if not all) data-driven graph learning algorithms on the data is exerted through the empirical covariance matrix, representing a sufficient statistic for the estimation problem. Its user-defined recursive update enables the framework to work in non-stationary environments, while iterative algorithms building on novel time-varying optimization tools explicitly take into account the temporal dynamics, speeding up convergence and implicitly including a  temporal-regularization of the solution. We specialize the framework to three well-known graph learning models, namely, the Gaussian graphical model (GGM), the structural equation model (SEM), and the smoothness-based model (SBM), where we also introduce ad-hoc vectorization schemes for structured matrices (symmetric, hollows, etc.) which are crucial to perform correct gradient computations, other than enabling to work in low-dimensional vector spaces and hence easing storage requirements. After discussing the theoretical guarantees of the proposed framework, we corroborate it with extensive numerical tests in synthetic and real data.\end{abstract}

\begin{IEEEkeywords}
graph topology identification, dynamic graph learning, network topology inference, graph signal processing
\end{IEEEkeywords}
}

\maketitle

\section{Introduction}

\IEEEPARstart{L}{earning} network topologies from data is very appealing. On the \textit{interpretable} side, the structure of a network reveals important descriptors of the network itself, providing to humans a prompt and explainable decision support system; on the \textit{operative} side, it is a requirement for processing and learning architectures operating on graph data, such as graph filters~\cite{coutino2019advances}. When this structure is not readily available from the application, a fundamental question is how to \textit{learn} it from data. The class of problems and the associated techniques concerning the identification of a network structure (from data)  are known as graph topology identification (GTI), graph learning, or network topology inference~\cite{mateos2019connecting, dong2019learning}. 

While up to recent years the GTI  problem has been focused on learning \textit{static} networks, i.e., networks which do not change their structure over time, the pervasiveness of networks with a \textit{time-varying} component  has quickly demanded new learning paradigms. This is the case for biological networks~\cite{kim2014inference}, subject to changes due to genetic and environmental factors, or financial markets~\cite{mantegna1999hierarchical}, subject to changes due to political  factors, among others. In these scenarios, a static approach would fail in accounting for the temporal variability of the underlying  structure, which is strategic  to, e.g., detect anomalies or discover new emerging communities.

In addition, prior (full) data availability should not be considered as a given. In real time applications, data need to be processed on-the-fly with low latency to, e.g., identify and block cyber-attacks in a communication infrastructure, or fraudulent transactions in a financial network. Thus, another learning component to take into account, is the modality of data acquisition. Here, we consider the extreme case in which data are processed on-the-fly, i.e., a fully \textit{online}  scenario.  

It is then clear how the necessity of having algorithms to learn time-varying topologies from online data is motivated by physical scenarios. For clarity, we elaborate on the three keywords -  identification, time-varying and online - which constitute, other than the title of the present work, also its main pillars.
\begin{itemize}
    \item \textit{Identification/learning}: it refers to the (optimization) process of learning the graph topology. 
    \item \textit{Time-Varying/dynamic}: it refers to the temporal variability of the graph in its edges, in opposition to the static case.
    \item \textit{Online/streaming}: it refers to the modality in which the data arrive and/or are processed, in opposition to a batch approach which makes use of the entire bulk of data. 
\end{itemize}
This emphasis on the terminology is important to understand the differences between the different existing works, presented next.

\subsection{Related Works}
\label{sec:related-works}
Static GTI has been originally addressed  from a statistical viewpoint and only in the past decade under a graph signal processing (GSP) framework~\cite{sandryhaila2013discrete}, in which different assumptions are made on how the data are coupled with the unknown topology; see \cite{mateos2019connecting, dong2019learning} for a tutorial. Only  recently, dynamic versions of the static counterparts have been proposed. For instance, \cite{kalofolias2017learningtvgraphs, yamada2020time} learn a sequence of graphs by enforcing a prior (smoothness or sparsity) on the edges of consecutive graphs; similarly, the work in \cite{hallac2017network} extends the graphical Lasso \cite{friedman2008sparse} to account for the temporal variability, i.e., by estimating a sparse time-varying precision matrix.  In addition to these works, the inference of causal relationships in the network structure, i.e., directed edges, has been considered in~\cite{baingana2017cascades, money2021online}. See~\cite{giannakis2018topology} for a  review of dynamic topology inference approaches.

The mentioned approaches tackle the dynamic graph learning problem by means of a two-step approach: \textit{i)} first, all the samples are collected and split into possibly overlapping windows;  \textit{ii)} only then the topology associated to each window is inferred from the data, possibly constrained to be similar to the adjacent ones. This modus-operandi fails to address the \textit{online} (data-streaming) setting, where data have to be processed on-the-fly either due to architectural (memory, processing) limitations or (low latency) application requirements, such as real-time decision making. 

 This line of work has been freshly investigated by~\cite{vlaski2018online}, which considers signals evolving according to a heat diffusion process, and by~\cite{shafipour2020online}, which assumes the data are graph stationary~\cite{marques2017stationary}. In~\cite{zaman2020online}, the authors consider a vector autoregressive model  to learn causality graphs by exploiting the temporal dependencies, while~\cite{saboksayr2021online} proposes an online task-dependent (classification) graph learning algorithm, in which class-specific graphs are learned from labeled signals (training phase) and then used to classify new unseen data. 

Differently from these works, our goal here is to provide a general (model-independent) algorithmic framework for time-varying GTI from online data that can be specialized to a variety of static graph learning problems. In particular, the generalization given by the framework enables us to render a static graph learning problem into its time-varying counterpart and to solve it via novel time-varying optimization techniques~\cite{simonetto2020time},  providing a trade off between the solution accuracy and the velocity of execution. We introduce ad-hoc vectorization schemes for structured matrices to solve graph learning problems in the context of the Gaussian graphical model, the structural equation model, and the smoothness based model.
All in all, a mature time-varying GTI  framework for online data is yet to be conceived. This is our attempt to pave the way for a unified and general view of the problem, together with solutions to solve it.

\subsection{Contributions}

This paper proposes a general-purpose algorithmic blueprint which unifies the theory of learning time-varying graphs from online data. The specific contributions of this general framework are:
    \begin{enumerate}
        \item[a)] it is \textit{model-independent}, i.e., it can be analyzed in its abstract form and then specialized under different graph learning models. We show how to instantiate three such models, namely, the Gaussian graphical model (GGM), the  structural equation model (SEM) and the smoothness-based model (SBM);
        
        \item[b)] it operates in \textit{non-stationary} environments, i.e., when the data statistics change over time. This is possible by expressing the considered models in terms of the sample covariance matrix, which can be then updated recursively for each new streaming sample with a user-defined function, which discards past information.
        
        \item[c)] it is \textit{accelerated} through a prediction-correction strategy, which takes into account the time-dimension. Its iterative nature enables a trade-off between following the optimal solution (accuracy) and an  approximate solution (velocity). It also exhibits an implicit regularization of the cost function due to the limited iteration budget at each time-instant, i.e., similar solutions at closed time instants are obtained.
    \end{enumerate}
    
    \noindent\textit{Notation}:  we use $x(i)$ and $X(i,j)$ to denote the $i$-th entry of the column vector $\bbx$ and the $ij$-th entry of the matrix $\bbX$, respectively.  Superscripts $^\top$ and $^\dagger$ denote the transpose and the pseudoinverse of a matrix, respectively, while operators $\tr(\cdot)$ and $\vec(\cdot)$ denote the matrix trace and matrix vectorization, respectively. The vectors $\bb0$ and $\boldsymbol{1}$, and the matrix  $\bbI$, denote the all-zeros vector, the all-ones vector, and the identity matrix, with dimension clarified in the context. The operators $\otimes$, $\odot$, $\oslash$ and $^\circ$ stand for Kronecker product, Hadamard (entry-wise) product, Hadamard (entry-wise) division and Hadamard (entry-wise) power, respectively.  We have $[\; \cdot \; ]_{+}= \max(\bb0, \cdot)$, where the maximum operates in an entry-wise fashion. Also, $\iota_{\ccalX}(\cdot)$ is the indicator function for the convex set $\ccalX$, for which holds $\iota_{\ccalX}(\bbx)= 0$ if $\bbx \in \ccalX$ and $+ \infty$ otherwise. Given two functions $f(\cdot)$ and $g(\cdot)$, $f \circ g(\cdot)$ denotes their composition. A function $f(\cdot)$ with argument $\bbx \in \reals^N$, which is parametrized by the time $t$, is denoted by $f(\bbx;t)$.  The gradient of the function $f(\bbx;t)$ with respect to $\bbx$ at the point $(\bbx;t)$ is denoted with $\nabla_\bbx f(\bbx;t)$, while $\nabla_{\bbx \bbx} f(\bbx;t)$ denotes the Hessian evaluated at the same point. The time derivative of the gradient, denoted with $\nabla_{t \bbx} f(\bbx;t)$, is the partial derivative of $\nabla_\bbx f(\bbx;t)$ with respect to the time $t$, i.e., the mixed first-order partial derivative vector of the objective. Finally, $\|\cdot\|_p$ denotes the $\ell_p$ norm of a vector or, for a matrix, the $\ell_p$ norm of its vectorization. The Frobenius norm of a matrix is denoted with $\|\cdot\|_F$. Without any subscript, the  norm $\|\cdot\|$ indicates the spectral norm.

\section{Problem Formulation}
In this section, we formalize the problem of learning graphs from data. In Section~\ref{sec:s-GTI}, we introduce the static graph topology inference problem, where we also recall three well-known models from the literature. Then, in Section~\ref{sec: online-TV} we formulate the (online) dynamic graph topology inference problem.

\subsection{Graph Topology Identification}
\label{sec:s-GTI}

We consider data living in a non-Euclidean domain described by a  graph $\ccalG=\{\ccalV, \ccalE, \bbS\}$, where $\ccalV=\{1, \ldots, N\}$ is the vertex set, $\ccalE \subseteq \ccalV \times \ccalV$ is the edge set, and $\bbS$ is an $N \times N$ matrix encoding the topology of the graph. The matrix $\bbS$ is  referred to as the graph shift operator (GSO) and typical instantiations include the (weighted) adjacency matrix $\bbW$~\cite{sandryhaila2013discrete} and the  graph Laplacian $\bbL$~\cite{shuman2013emerging}. By associating to each node $i \in \ccalV$ a scalar value $x(i)$, we define $\bbx=[x(1), \ldots, x(N)]^\top \in \mathbb{R}^N$ as a \textit{graph signal} mapping the node set to the set of real numbers.

Consider now the matrix $\bbX=[\bbx_1, \ldots, \bbx_T]$ that stacks over the columns $T$ graph signals generated from an unknown  graph-dependent process $\ccalF(\cdot)$; i.e., $\bbX= \ccalF(\bbS)$. Then, a GTI algorithm aims to learn the graph topology, i.e., to solve the ``inverse'' problem (not always well defined):
\begin{align}
\label{eq:inverse}
    \bbS= \ccalF^{-1}(\bbX).
\end{align}
The function ${\mathcal F}(\cdot)$ basically describes how the data are coupled with the graph and its knowledge is crucial. The data and the graph alone are insufficient to cast a meaningful graph learning problem. On one side, we need to know how the data depends on the graph from which they are generated. On the other side, we have to enforce some prior knowledge on the graph we want to learn.

\smallskip\noindent\textbf{Graph-data models.}
The choice of a data model is the forerunner of any GTI technique and, together with the graph-data coupling priors (e.g., smoothness, bandlimitedness) differentiates the different  approaches. Due to their relevance for this work, we  recall three widely used topology identification methods, namely the Gaussian graphical model~\cite{dempster1972covariance}, the structural equation model~\cite{ullman2003structural}, and the smoothness-based model~\cite{kalofolias2016learn}.

\makeatletter
\renewcommand\subsubsection{\@startsection{subsubsection}{3}{\z@}%
                                     {-1ex\@plus -1ex \@minus -.2ex}%
                                     {-0.5ex \@plus -.2ex}
                                     {\normalfont\normalsize\itshape}}
\makeatother
\subsubsection{Gaussian graphical model (GGM)} assumes each graph signal $\bbx_t$ is drawn from a multivariate Gaussian distribution $\ccalN(\boldsymbol{\mu}, \mathbf{\Sigma})$ with mean $\boldsymbol{\mu}$ and positive-definite covariance matrix $\mathbf{\Sigma}$. 
By setting the graph shift operator to be the precision matrix  $\bbS=\mathbf{\Sigma}^{-1}$, \emph{graph learning in a GGM amounts to precision matrix estimation}, which in a maximum likelihood  (MLE) sense can be formulated as:
\begin{align}
\label{eq:ggm}   
\begin{array}{l}
\underset{\bbS}{\operatorname{minimize}} -\log \operatorname{det}(\bbS) +\operatorname{tr}(\bbS \hat{\mathbf{\Sigma}}) \\
\;\;\; \text { s. t. } \quad \bbS \in \mathbb{S}_{++}^{N}
\end{array}
\end{align}
where $\empcov=  \frac{1}{T} \bbX\bbX^\top$ is the sample covariance matrix  and  $\mathbb{S}_{++}^{N}$ is the convex cone of positive-definite matrices.
In this context, matrix $\bbS$ can be interpreted as the adjacency matrix  (with self loops), although the problem can also be solved under some additional constraints forcing ${\bf S}$ to be a Laplacian~\cite{kumar2020unified}.

\subsubsection{Structural equation model (SEM)} neglecting
possible external inputs, and assuming an undirected graph, the SEM poses a linear dependence between the signal value $x_t(i)$ at node $i$ and the signal values at some other nodes $\{x_t(j)\}_{j\neq i}$, representing the endogenous variables, i.e.,:
\begin{align}
\label{eq:sem-model}
    x_t(i)= \sum_{j\neq i } S(i,j)  x_t(j) + e_t(i), \quad t=1, \ldots, T
\end{align}
where $S(i,j)$ weights the influence that node $j$ exerts on node $i$, and $e_t(i)$ represents unmodeled effects. In this view, with $\bbS$ encoding the graph connectivity, model \eqref{eq:sem-model} considers each node to be influenced only by its one-hop neighbors. In vector form, we can write \eqref{eq:sem-model}  as:
\begin{align}
\label{eq:sem-data-model}
    \bbx_t= \bbS\bbx_t + \bbe_t, \;\;\; t=1, \ldots, T,
\end{align}
with $S(i,i)=0$, for $i=1, 2, \ldots, N$. Also, we consider $\bbe_t$ white noise with standard deviation $\sigma_e$.
Graph learning under a SEM implies estimating matrix $\bbS$ by solving:
\begin{align}
\label{eq:sem-estimation}
\begin{array}{l}
\underset{\bbS}{\operatorname{minimize}}\;\;\frac{1}{2T} \|\bbX - \bbS\bbX\|_F^2 + g(\bbS), \\
\;\;\; \text { s. t. } \;\;\bbS \in \ccalS 
\end{array}
\end{align}
where $\ccalS = \{\bbS \vert \operatorname{diag}(\mathbf{S})=\bb0,  S(i,j)=S(j,i), i \neq j\}$, and $g(\bbS)$ is a regularizer enforcing $\bbS$ to have specific properties; e.g., sparsity.
In this context, matrix $\bbS$ is usually interpreted as the adjacency matrix of the network (without self loops). The first term of \eqref{eq:sem-estimation} can be equivalently rewritten as:
\begin{align}
    f(\bbS)\!=\!\frac{1}{2T} \|\bbX - \bbS\bbX\|_F^2\!=\! \frac{1}{2} \!  [\tr(\bbS^2\empcov)\!   -\!2  \tr(\bbS\empcov) \! +\! \!\tr(\empcov)].
\end{align}
which highlights its dependence on $\empcov$.

\subsubsection{Smoothness-based model (SBM)} assumes each graph signal $\bbx_t$ to be smooth over the graph $\ccalG$, where the notion of graph-smoothness is formally captured  by the Laplacian quadratic form:
\begin{align}
    \text{LQ}_{\ccalG}(\bbx_t):= \bbx_t^\top \bbL \bbx_t = \sum_{i \neq j} W(i,j) (x_t(i)- x_t(j))^2.
\end{align}
A low value of LQ$_{\ccalG}(\bbx_t)$ suggests that adjacent nodes $i$ and $j$ have similar values $x_t(i)$ and $x_t(j)$ when the edge weight $W(i,j)$ is high. 

Thus, the quantity:
\begin{align}
\label{eq:TV}
    \overline{\text{LQ}}_{\ccalG}(\bbX)= \frac{1}{T}\sum_{t=1}^T  \text{LQ}_{\ccalG}(\bbx_t)= \frac{1}{T}\tr(\bbX^\top\bbL\bbX)=\tr( {\bf L} \hat{\boldsymbol \Sigma}) 
\end{align}
represents the average signal smoothness on top of $\ccalG$, which can be rewritten as the graph-dependent function:
\begin{align}
\label{eq:smoothness-quantity}
    f(\bbS) = \tr(\Diag(\bbS \boldsymbol{1})\empcov ) - \tr(\bbS\empcov)
\end{align}
with $\bbS= \bbW$. Building upon this quantity, graph learning under a graph smoothness prior can be casted as:
\begin{align}
\label{eq:smoothness-estimation}
    \begin{array}{l}
\underset{\bbS}{\operatorname{minimize}} \;\;\;f(\bbS) +g(\mathbf{S}) \\
\text { s. t. } \bbS \in \ccalS
\end{array}
\end{align}

where the term $g(\bbS)$ accommodates for additional topological properties (e.g., sparsity) and also helps avoiding the trivial solution $\bbS=\bb0$. The set $\ccalS=\{\bbS \vert \operatorname{diag}(\mathbf{S})=\bb0, S(i,j)=S(j,i) \geq 0, i \neq j\}$ encodes the  topological structure, which coincides with the set of hollow  symmetric matrices (i.e., with zeros on the diagonal) with positive entries.

\begin{remark}
In~\cite{kalofolias2016learn}, the authors express the smoothness quantity \eqref{eq:TV} in terms of the weighted adjacency matrix $\bbW$ and a matrix  $\bbZ \in \mathbb{R}_+^{N \times N}$ representing the row-wise (squared) Euclidean distance matrix of $\bbX$; i.e.,  $\tr(\bbX^\top\bbL\bbX)= \frac{1}{2} \tr(\bbW \bbZ)= \frac{1}{2}\|\bbW \odot \bbZ\|_1$. This formulation mainly brings the intuition that adding explicitly a sparsity term to the objective function would simply add a constant term to $\bbZ$. We favour \eqref{eq:smoothness-quantity} as a measure of graph signal smoothness since it fits within our framework, as will be clear soon. We emphasize however how the two formulations are equivalent, since $\empcov$ can be directly expressed as a function of $\bbZ$.

\end{remark}

\subsection{Online Time-Varying Topology Identification}
\label{sec: online-TV}

When the graph topology changes over time, the changing interactions are represented by the sequence of graphs $\{\ccalG_t=\{\ccalV, \ccalE_t, \bbS_t\}\}_{t=1}^\infty$, where $t \in {\mathbb N}_+$ is a discrete time index.   This sequence of graphs, which is discrete in nature, can be interpreted as the sampling of some "virtual" continuous time-varying graph using the sampling period $h=1$.  To relate our expressions to existing literature, we will make the parameter $h$ explicit in the formulas, yet it is important to remember that $h=1$.
Together with the graph sequence $\{\ccalG_t\}_{t=1}^\infty$, we consider also streaming graph signals $\{\bbx_t\}_{t=1}^\infty$, such that signal $\bbx_t$ is associated to graph $\ccalG_t$. At this point, we are ready to formalize  the time-varying graph topology identification (TV-GTI) problem.
\vspace{0.2cm}

\noindent \textbf{Problem statement.} \textit{Given an online sequence of graph signals $\{\bbx_t\}_{t=1}^{\infty}$ arising from an unknown time-varying network, the goal is to identify the time-varying graph topology $\{\ccalG_t\}_{t=1}^{\infty}$; i.e., to learn the graph shift operator sequence $\{\bbS_t\}_{t=1}^\infty$ from $\{\bbx_t\}_{t=1}^{\infty}$. On top of this, to highlight the trade-off between accuracy and low-latency of the algorithm's solution.}

\vspace{0.2cm}
Mathematically, our goal is to solve the sequence of time-invariant problems:
\begin{align}
\label{eq:probl-stat}
    \bbS_t^\star:= \argmin_\bbS F(\bbS;t) \quad t=1, 2, \ldots
\end{align}
where function $F(\cdot;t)$ is a time-varying cost function that depends on the data model [cf. Section~\ref{sec:s-GTI}], and the index $t$ makes the dependence on time explicit, which is due to the arrival of new data.
Although we can solve problem \eqref{eq:probl-stat} for each $t$ separately with (static) convex optimization tools, the need of a low-latency stream of solutions makes this strategy unappealing. This approach also fails to capture the inherent temporal structure of the problem, i.e, it does not exploit the prior time-dependent structure of the graph, which is necessary in time-critical applications.

To exploit also this temporal information, we build on recent advances of time-varying optimization \cite{simonetto2016class,simonetto2020time} and propose a general framework for TV-GTI suitable for non-stationary environments. The proposed approach operates on-the-fly and updates the solution as a new signal $\bbx_t$ becomes available. 
The generality of this formulation enables us to define a \textit{template} for the TV-GTI problem, which can be specialized to a variety of static GTI methods. The only information required is the first-order (gradient) and possibly second-order (Hessian) terms of the function. In the next section, we lay down the mathematics of the proposed approach.  The central idea is to follow the optimal time-varying solution of problem \eqref{eq:probl-stat} with lightweight proximal operations \cite{martinet1970regularisation}, which can be additionally accelerated  with a  \textit{prediction-correction} strategy. This strategy, differently from other adaptive optimization strategies such as least mean squares and recursive least squares, uses an evolution model to predict the solution, and observes new data to correct the predictions. The considerations of Section~\ref{sec:dynamic-graph-learning} will be then specialized to the different data models of Section~\ref{sec:s-GTI} in Section~\ref{sec:network-models}, further analyzed theoretically in Section~\ref{sec:convergence-analysis}, and finally validated experimentally in Section~\ref{sec:numerical-result}.

\section{Online Dynamic Graph Learning}
\label{sec:dynamic-graph-learning}

To maintain our discussion general, we consider the \textit{composite}  time-varying function:
\begin{align}
\label{eq: composite}
    F(\bbS;t):= f(\bbS;t) + \lambda g(\bbS;t)
\end{align}
where $f: \mathbb{R}^{N \times N} \times {\mathbb N}_+ \rightarrow {\mathbb R}$ is a smooth\footnote{We use the term smoothness for functions and the term graph-smoothness for graph signals.} strongly convex function \cite{vial1983strong}  encoding a fidelity measure and  $g: \mathbb{R}^{N \times N} \times {\mathbb N}_+ \rightarrow {\mathbb R}$ is a closed convex and proper function, potentially non differentiable, representing possible regularization terms. For instance, function $f(\cdot)$ can be the GGM objective function of \eqref{eq:ggm}, the SEM least-squares term of \eqref{eq:sem-estimation}, or the SBM smoothness measure in~\eqref{eq:TV}.

Solving a time-varying optimization problem implies solving the \textit{template} problem:
\begin{align}
\label{eq:time-varying}
    \bbS_t^\star:= \argmin_\bbS  f(\bbS;t) + \lambda g(\bbS;t) \quad \text{for } t=1, 2, \ldots
\end{align}
In other words, the goal is to find the sequence of optimal solutions $\{\bbS_t^\star\}_{t=1}^\infty$ of \eqref{eq:time-varying}, which we will also call the \textit{optimal trajectory}. However, solving exactly problem \eqref{eq:time-varying} in real time is infeasible because of the computational and time constraints. The exact solution may also be unnecessary since by itself it still approximates the true underlying time-varying graph. Under these considerations, an online algorithm that updates the approximate solution $\hat{\bbS}_{t+1}$ of \eqref{eq:time-varying} at time $t+1$, based on the former (approximate) solution $\hat{\bbS}_{t}$ is highly desirable for low complexity and fast execution\footnote{Problem \eqref{eq:time-varying} also endows the constrained case, in which the function $g(\cdot)$ comprises indicator functions associated to each constraint.}.

\subsection{Reduction} 
\label{subsec:reduction}
Instrumental for the upcoming analysis is to observe that the number of independent variables of the graph representation matrix plays an important role in terms of storage requirements, processing complexity and, most importantly, in the correct computations of function derivatives with respect to those variables. Thus, when considering structured matrices, such as symmetric, hollow or  diagonal, we need to take into account their structure. We achieve this by  ad-hoc vectorization schemes through  duplication and elimination matrices, inspired by~\cite{magnus2019matrix}.

Consider a matrix $\bbS \in \reals^{N \times N}$ and its corresponding ``standard'' vectorization $ \vec(\bbS) \in \reals^{N^2}$. Depending on the specific structure of  $\bbS$, different  reduction and vectorization schemes can be adopted, leading to a lift from a matrix space to a vector space. The following spaces are of interest.

\smallskip\noindent\textbf{h-space.} \label{subsec:h-space} If $\bbS$ is symmetric, the number of independent variables is $k=N(N+1)/2$, i.e., the variables in its diagonal and its lower (equivalently, upper) triangular part. We can isolate these variables by representing  matrix $\bbS$ with its \textit{half-vectorization} form, which we denote as  $\bbs=\operatorname{vech}(\bbS) \in \mathbb{R}^{k}$. This isolation is possible by introducing the elimination matrix $\bbE \in \mathbb{R}^{k \times N^2}$ and the duplication matrix $\bbD \in \mathbb{R}^{N^2 \times k}$ which respectively selects the independent entries of $\bbS$, i.e., $\bbE\operatorname{vec}(\bbS) =\bbs$, and duplicates the entries of $\bbs$, i.e, $\bbD\bbs = \operatorname{vec}(\bbS)$.  We call this vector space as the half-vectorization space (h-space).
    
\smallskip\noindent\textbf{hh-space.} If $\bbS$ is symmetric and hollow, the number of independent variables is $l=N(N-1)/2$, i.e.,  the variables on its strictly lower (equivalently, upper) triangular part.  In this case, we can represent  matrix $\bbS$ in its \textit{hollow half-vectorization} form, which we denote as $\bbs=\vechh(\bbS) \in \mathbb{R}^l$. This reduction is achieved by applying the hollow elimination and duplication matrices $\bbE_h \in \mathbb{R}^{l \times N^2}$ and $\bbD_h \in \mathbb{R}^{N^2  \times l}$, respectively, to the vectorization of $\bbS$. In particular, $\bbE_h$ extracts the variables of the strictly lower  triangular part of the matrix, i.e., $\bbs = \bbE_h \vec(\bbS)$,  while $\bbD_h$ duplicates the values and fills in zeros in the correct positions, i.e., $\vec(\bbS) = \bbD_h \bbs$. We refer to the associated vector space as the hollow half-vectorization space (hh-space).

\smallskip\noindent With the above discussion in place, we can now illustrate the general framework in terms of vector-dependent functions $f(\bbs)$ for a vector $\bbs$, in contrast to matrix-dependent functions $f(\bbS)$, simplifying exposition and notation. However, we underline that the information embodied in $\bbS$ and $\bbs$ is the same. 

\subsection{Framework} We develop a prediction-correction strategy for problem  \eqref{eq:time-varying} that starts from an estimate  $\hat{\bbs}_t$ at time instant $t$, and \textit{predicts} how this solution will change in the next time step $t+1$. This predicted topology is then \textit{corrected} after a new datum $\bbx_{t+1}$ is available at time $t+1$. More specifically, the scheme has the following two steps:

\begin{enumerate}[labelsep=0.1cm, leftmargin=*]
    \item[\textbf{(1)}] \hspace{-0.1cm} \textit{Prediction}: at time $t$, an approximate function $\hat{F}(\bbs;t+1)$ of the true yet \textit{unobserved} function $F(\bbs;t+1)$ is formed, using only information available at time $t$. 
    Then, using this approximated cost, we derive an estimate  $\bbs_{t+1|t}^\star$, of how the topology will be at time $t+1$, using only the information up to time $t$. This estimate is found by solving:
   \begin{align}
    \label{eq:prediction}
        \bbs_{t+1\vert t}^\star:= \argmin_\bbs \hat{F}(\bbs;t+1).
    \end{align}
   To avoid solving \eqref{eq:prediction} for each $t$, we find an estimate $\hat{\bbs}_{t+1\vert t}$  by applying $P$ iterations of a problem-specific descent operator $\hat{\ccalT}$ (e.g., gradient descent, proximal gradient) for which $\bbs_{t+1\vert t}^\star= \hat{\ccalT} \bbs_{t+1\vert t}^\star$, i.e., $\bbs_{t+1\vert t}^\star$ is a fixed point of $\hat{\ccalT}$.  See Appendix~\ref{sec:app-a} for possible instances of $\hat{\ccalT}$.
    
    In other words, problem \eqref{eq:prediction} is solved recursively as:
    \begin{align}
    \label{eq:prediction-step}
    \hat{\bbs}^{p+1}= \hat{\ccalT}\hat{\bbs}^p, \quad  p=0,1, \ldots, P-1
\end{align}
with $\hat{\bf s}^0=\hat{\bf s}_t$.
Once $P$ steps are performed, the predicted topology is set to $\hat{\bbs}_{t+1 \mid t}= \hat{\bbs}^{P}$, which approximates the solution of \eqref{eq:prediction} and, in turn, will be close to $\bbs_{t+1}^{\star}$ at time $t+1$.

For our framework, we consider a Taylor-expansion based prediction to approximate the first term of $F(\cdot; t+1)$, i.e.,    $f(\cdot; t+1)$ [cf. \eqref{eq: composite}], leading to the following quadratic function: 
    \begin{align}
    \label{eq:approximate-function}
     \nonumber \hat{f}(\bbs; t+1)&= \frac{1}{2} \bbs^\top \nabla_{\bbs\bbs} f\left(\hat{\bbs}_{t} ; t\right) \bbs
 +\big[\nabla_{\bbs} f\left(\hat{\bbs}_{t} ; t\right)+ \\
 &+ h \nabla_{t \bbs} f\left(\hat{\bbs}_{t} ; t\right)-\nabla_{\bbs \bbs} f\left(\hat{\bbs}_{t} ; t\right) \hat{\bbs}_{t}\big]^\top\bbs
 \end{align}
    where $\nabla_{\bbs \bbs} f (\cdot) \in \mathbb{R}^{N \times N}$ is the Hessian matrix of $f(\cdot)$ with respect to $\bbs$ and  $\nabla_{t \bbs} f(\cdot) \in \mathbb{R}^{N}$ is the partial derivative of the gradient of $f(\cdot)$ w.r.t.  time $t$. 
    
   To approximate the second term of $F(\cdot; t+1)$, i.e., $g(\cdot; t+1)$ [cf. \eqref{eq: composite}], we use a one step-back prediction, i.e., ${\hat{g}(\bbs; t+1)= g(\bbs; t)}$. This implies that $\hat{g}(\cdot)$ does not depend on $t$, which in turn makes the constraint set and the regularization term independent of time, an assumption usually met in state-of-the-art topology identification~\cite{mateos2019connecting}. Henceforth, we will omit this time dependency.

    \item[\textbf{(2)}] \textit{Correction}: at time $t+1$ the new data $\bbx_{t+1}$ and hence the cost function $F(\bbs;t+1)$ becomes available. Thus, we correct the prediction $\hat{\bbs}_{t+1\vert t}$  by solving the correction problem:
   \begin{align}
    \label{eq:correction}
        \bbs_{t+1}^\star:= \argmin_\bbs F(\bbs;t+1).
    \end{align}
    Also in this case, we solve \eqref{eq:correction} with iterative methods to obtain an approximate solution $\hat{\bbs}_{t+1}$ by applying $C$ iterations of an operator $\ccalT$. In other words, the correction problem \eqref{eq:correction} is addressed through the recursion:
    \begin{align}
    \label{eq:correction-step}
    \hat{\bbs}^{c+1}= \ccalT\hat{\bbs}^c, \quad c=0,1, \ldots, C-1
\end{align}
with  $\hat{\bf s}^0=\hat{\bf s}_{t+1|t}$. Once the $C$ steps are performed, the correction graph $\hat{\bbs}_{t+1}$ is set to $\hat{\bbs}_{t+1}= \hat{\bbs}^{C}$, which will approximate the solution $\bbs_{t+1}^{\star}$ of \eqref{eq:correction}.
\end{enumerate}
%
%

\begin{algorithm}[t]
\begin{algorithmic}[1]
\Require Feasible $\hat{\bbS}_0$, $f(\bbS; t_0)$, $P$, $C$, operators $\hat{\ccalT}$ and  $\ccalT$ 
\State $\hat{\bbs}_0 \xleftarrow[]{}$ ad-hoc vectorization of $\hat{\bbS}_0$
\For{$t=0,1, \ldots$}
\State // \textit{Prediction}
\State Initialize the predicted variable $\hat{\bbs}^0= \hat{\bbs}_t$

\For{$p= 0,1, \ldots, P-1$}

    Predict $\hat{\bbs}^{p+1}$ with \eqref{eq:prediction-step}
\EndFor

Set the predicted variable $ \hat{\bbs}_{t+1 \mid t}= \hat{\bbs}^{P}.
$
\State // \textit{Correction - time $t+1$: new data arrive}
\State Initialize the corrected variable $\hat{\bbs}^0= \hat{\bbs}_{t+1 \mid t}$
\For{$c= 0,1, \ldots, C-1$}

    Predict $\hat{\bbs}^{c+1}$ with \eqref{eq:correction-step}
\EndFor

Set the corrected variable $\hat{\bbs}_{t+1}= \hat{\bbs}^{C}$
\EndFor
\end{algorithmic}
\caption{Online Time-Varying Graph Topology Inference}
\label{alg:complete}
\end{algorithm}

\smallskip\noindent Algorithm~\ref{alg:complete} shows the pseudocode for the general online TV-GTI framework.

\begin{remark}
 We point out that the framework can adopt different approximation schemes, such as extrapolation-based techniques, and can also include  time-varying constraint sets. The choice of approximation-scheme depends on the properties of the problem itself along with the required prediction accuracy. For an in-depth theoretical discussion regarding different prediction approaches and relative convergence results, refer to~\cite{bastianello2020primal}.
\end{remark}


\section{Network Models and Algorithms}
\label{sec:network-models}
In this section, we specialize the proposed framework to the three static topology inference models discussed in Section~\ref{sec:s-GTI}. Notice that the data dependency of data-driven graph learning algorithms is exerted via the empirical covariance matrix $\empcov$ of the graph signals; we have already shown this for the three considered models of Section~\ref{sec:s-GTI}. In other words, graph-dependent objective functions of the form $F(\bbS)$ could be explicitly expressed through their parametrized version $F(\bbS; \empcov)$. This rather intuitive, yet crucial observation, is central to render the proposed framework model-independent and adaptive, as explained next.

\noindent\textbf{Non-stationarity.}
Relying on the explicit dependence of function $F(\cdot)$ on $\empcov$ and envisioning non-stationary environments, we let the algorithm be adaptive by discarding past information. That is,  function $F(\bbS; t)$ in \eqref{eq: composite} can be written as $F(\bbS; \empcov_t)$, with $\empcov_t$ the empirical covariance matrix, up to time $t$, with past data gradually discarded.
This makes the framework adaptive and model-independent. The adaptive behavior can be shaped by, e.g., the  exponentially-weighted moving average (EWMA) of the covariance matrix:
\begin{align}
\label{empcov-rule}
    \empcov_t= \gamma\empcov_{t-1} + (1- \gamma)\bbx_t\bbx_t^\top \quad t=1, 2 \ldots
\end{align}
where the  forgetting factor $\gamma \in (0,1)$ downweighs (for $\gamma \rightarrow 0$) or upweighs (for $\gamma \rightarrow 1$) past data contributions. For stationary environments, an option is the infinite-memory matrix covariance update $\empcov_t= \frac{t-1}{t}\empcov_{t-1} + \frac{1}{t}\bbx_t \bbx_t^\top$.

\subsection{Time-Varying Gaussian Graphical Model}
The GGM problem \eqref{eq:ggm}, adapted to a time-varying setting  following template \eqref{eq:time-varying} leads to:
\begin{subequations}
\begin{align}
\label{eq:f-ggm}
    f(\bbS; t) &= -\log \operatorname{det}(\bbS) +\operatorname{tr}(\bbS \hat{\mathbf{\Sigma}}_{t})\\
     g(\bbS;t) &= \iota_{\mathbb{\ccalS}}(\bbS)
\end{align}
\end{subequations}
where $\ccalS= \mathbb{S}_{++}^N$. In this case $g(\cdot)$ encodes the constraint set of positive definite matrices and  the regularization parameter is $\lambda=1$.

Since $\bbS$ is symmetric, we use the half-vectorization $\bbs=\operatorname{vech}(\bbS) \in \mathbb{R}^{k}$ to reduce the number of independent variables from $N^2$ to $k=N(N+1)/2$. Then, the gradient and the Hessian of the function $f(\cdot)$ in the h-space are respectively: 
\vspace{-0.5mm}
\begin{subequations}
\begin{align}
\label{eq:gradient-ggm}
  \nabla_\bbs f(\bbs; t)&=  \bbD^{\top} \operatorname{vec} (\hat{\mathbf{\Sigma}}_t - \bbS^{-1}) \\ \vspace{1cm}
\label{eq:hessian-ggm}
\nabla_{\bbs \bbs} f(\bbs; t) &= \bbD^{\top} (\bbS \otimes \bbS )^{-1} \bbD.
\end{align}
\end{subequations}
Likewise, the discrete-time derivative of the gradient is given by the partial mixed-order derivative~\cite{simonetto2016class}:
\begin{align}
\label{eq: time-derivative-ggm}
\nabla_{t \bbs} f(\bbs ; t)=\bbD^{\top} \operatorname{vec} (\hat{\mathbf{\Sigma}}_{t} - \hat{\mathbf{\Sigma}}_{t-1}).
\end{align}
Note the Hessian term  \eqref{eq:hessian-ggm} is time-independent, while the time-derivative of the gradient \eqref{eq: time-derivative-ggm} is graph-independent.

Now, by defining $\hat{\bbs}_t:= \operatorname{vech(\hat{\bbS}_t)} \in \mathbb{R}^{k}$, we can particularize Algorithm~\ref{alg:complete} to:
\begin{itemize}
    \item \textbf{Prediction:} with $\hat{\bbs}^{0}$  initialized as $\hat{\bbs}^{0}= \hat{\bbs}_t$, the prediction update is :
        \begin{align}
        \label{eq:prediction-ggm}
            \hat{\bbs}^{p+1} &= \mathbb{P}_{\ccalS}[\hat{\bbs}^{p} - 2\alpha_t (\nabla_\bbs f(\hat{\bbs}_{t} ; t) + \nonumber \\ &+ \nabla_{\bbs \bbs} f(\hat{\bbs}_{t} ; t)\left(\hat{\bbs}^{p}-\hat{\bbs}_{t}\right)  + h \nabla_{t \bbs} f(\hat{\bbs}_{t} ; t)  ) ]
        \end{align}
        for $p=0,1, \ldots, P-1$, where $\alpha_t$ is a (time-varying) step size. Equation~\eqref{eq:prediction-ggm} entails a descent step along the approximate function $\hat{f}(\cdot; t+1)$  in~\eqref{eq:approximate-function}, followed by the projection onto the convex set $\ccalS$; see Appendix~\ref{sec:app-a} for the definition of $\mathbb{P}_{\ccalS}(\cdot)$. Then, the prediction $\hat{\bbs}_{t+1 \mid t}$ is set to $\hat{\bbs}_{t+1 \mid t}= \hat{\bbs}^{P}$.

        \item \textbf{Correction}:  by setting  $\hat{\bbs}^{0}\!=\! \hat{\bbs}_{t+1 \mid t}$, the correction update is:
            \begin{equation}
            \label{eq:correction-ggm}
                 \hat{\bbs}^{c+1}
            =\mathbb{P}_{\ccalS} \left[ \hat{\bbs}^{c} - \beta_t \nabla f(\hat{\bbs}^c; t+1)  \right]
            \end{equation}
            for $c=0,1, \ldots, C-1$, where $\beta_t$ is a (time-varying) step size. Equation~\eqref{eq:correction-ggm} entails a descent step along the true  function $f(\cdot; t+1)$,  followed by the projection onto the  set $\ccalS$.  The correction $\hat{\bbs}_{t+1}$ is finally set to  $\hat{\bbs}_{t+1}= \hat{\bbs}^{C}$.
    \end{itemize}
    
    \noindent The prediction step \eqref{eq:prediction-ggm} instantiates \eqref{eq:prediction-step} to $\hat{\ccalT}= \mathbb{P}_\ccalS \circ (I - \alpha_t \nabla_{\bbs} \hat{f} )(\cdot)$,  where $I(\cdot)$ is the identity function $I(\bbs)=\bbs$. Similarly, the correction step \eqref{eq:correction-ggm} instantiates \eqref{eq:correction-step}  to  $\ccalT= \mathbb{P}_\ccalS \circ (I - \beta_t \nabla_{\bbs}f )(\cdot)$. The overall computational complexity of one PC iteration is dominated by the matrix inversion and matrix multiplication, incurring a cost of $\ccalO(N^3)$. A correction-only algorithm would also incur a cost of $\ccalO(N^3)$ per iteration. See Appendix~\ref{app:complexity} for details.
\subsection{Time-Varying Structural Equation Model} 
The SEM problem \eqref{eq:sem-estimation}, adapted to a time-varying setting with sparsity-promoting regularizer, leads to [cf. \eqref{eq:time-varying}]:
\begin{subequations}
\begin{align}
\label{eq:f-sem}
          f(\bbS; t) &=  \frac{1}{2}   [\tr(\bbS^2\empcov_t)   -2 \tr(\bbS\empcov_t)  + \tr(\empcov_t)]\\
        \label{eq:g-sem}
     g(\bbS;t) &= \|\bbS\|_{1} + \iota_{\mathbb{\ccalS}}(\bbS)
\end{align}
\end{subequations}
where $\ccalS=\{ \bbS \in \mathbb{S}^N \vert  \operatorname{diag}(\bbS)=\bb0, S(i,j)=S(j,i), i \neq j\}$  is the set of hollow symmetric matrices, and $\|\bbS\|_{1}=\|\vec(\bbS)\|_1$. 
Since $\bbS$ is symmetric and hollow, we operate on the hh-space to make the problem unconstrained and reduce the number of independent variables from $N^2$ to $l=N(N-1)/2$, through its hollow half-vectorization form $\bbs=\vechh(\bbS) \in \mathbb{R}^l$. In the hh-space, equations \eqref{eq:f-sem} and \eqref{eq:g-sem} become:
\begin{subequations}
\begin{align}
\label{eq:f-sem-vec}
    f(\bbs; t) &= \frac{1}{2}\bbs^\top \bbQ_t \bbs -2 \bbs^\top \hat{\bbsigma}_t + \frac{1}{2}\hat{\sigma}_t \\
    \label{g-sem-vec}
    g(\bbs;t) &= 2\|\bbs\|_1
\end{align}
\end{subequations}
where $\bbQ_t:=\bbD_h^\top(\empcov_t \otimes \bbI)\bbD_h$ with $\otimes$ denoting the Kronecker product, $ \hat{\bbsigma}_t=\vechh(\empcov_t)$, and $\hat{\sigma}_t= \tr(\empcov_t)$. Since $\bbQ_t \succeq 0$,  \eqref{eq:f-sem-vec} is convex. 

To solve the time-varying SEM (TV-SEM) problem, we derive the gradient and the Hessian of  function $f(\cdot)$ in the hh-space as:
\begin{subequations}
    \begin{align}
    \label{eq:gradient-sem}
     \nabla_\bbs f(\bbs; t) &= \bbQ_t \bbs -2 \hat{\bbsigma}_t \\ \vspace{1cm}
    \label{eq:hessian-sem}
     \nabla_{\bbs \bbs} f(\bbs; t) &= \bbQ_t
    \end{align}
\end{subequations}
Notice here how the Hessian is time-varying and independent on  $\bbs$, differently from the GGM case. The time derivative of the gradient is given by the partial mixed-order derivative:
\begin{align}
\label{eq:time-gradient-sem}
    \nabla_{t\bbs}f(\bbs; t)= \frac{1}{h}[ (\bbQ_t - \bbQ_{t-1})\bbs -2 (\hat{\bbsigma}_t - \hat{\bbsigma}_{t-1})]
\end{align}

Now, by defining  $\hat{\bbs}_t:= \vechh(\hat{\bbS}_t) \in \mathbb{R}^l$, we can particularize Algorithm~\ref{alg:complete} to:
\begin{itemize}
    \item \textbf{Prediction:}  set $\hat{\bbs}^0=\hat{\bbs}_t$. Then, the  prediction is the proximal-gradient update:
    \begin{subequations}
    \label{eq: steps-sem}
 \begin{align}
 \label{eq:prediction-sem-1}
     \bbu^p &= \hat{\bbs}^p - \alpha_t [\nabla_\bbs f(\hat{\bbs}_{t} ; t) + \nonumber \\ &+ \nabla_{\bbs \bbs} f(\hat{\bbs}_{t} ; t)\left(\hat{\bbs}^{p}-\hat{\bbs}_{t}\right)  + h \nabla_{t \bbs} f(\hat{\bbs}_{t} ; t)]\\
     \label{eq:prediction-sem-2}
     \hat{\bbs}^{p+1}&= \operatorname{sign}(\bbu^p) \odot [|\bbu^p|- 2\alpha_t\lambda\boldsymbol{1} ]_+
 \end{align}
     \end{subequations}
for $p=0, \ldots, P$. Equation~\eqref{eq:prediction-sem-1} entails a descent step along the approximate function $\hat{f}(\cdot; t+1)$ in~\eqref{eq:approximate-function}, followed by  the non-negative soft-thresholding operator in~\eqref{eq:prediction-sem-2}, which sets to zero all the (negative) edge weights of the graph obtained after the gradient
descent in \eqref{eq:prediction-sem-1}. See Appendix~\ref{sec:app-a} for the formal definition of proximal operator, leading to \eqref{eq:prediction-sem-1} and \eqref{eq:prediction-sem-2}.
The final prediction $\hat{\bbs}_{t+1\vert t}$ is set to $\hat{\bbs}_{t+1\vert t}= \hat{\bbs}^P$.

\smallskip\item \textbf{Correction:} set $\hat{\bbs}^0=\hat{\bbs}_{t+1\vert t}$. Then, the correction is the proximal-gradient update:
\begin{subequations}
\label{eq: steps-sem-correction}
\begin{align}
\label{eq:correction-sem-1}
    \bbu^c&= \hat{\bbs}^c - \beta_t \nabla f(\hat{\bbs}^c; t+1)\\
    \label{eq:correction-sem-2}
     \hat{\bbs}^{c+1}&= \operatorname{sign}(\bbu^c) \odot [|\bbu^c|- 2\beta_t\lambda\boldsymbol{1} ]_+
\end{align}
    \end{subequations}
for $c=0, \ldots, C-1$. Equation~\eqref{eq:correction-sem-1} entails a descent step along the true function $f(\cdot; t+1)$,  followed by the non-negative soft-thresholding operator in~\eqref{eq:correction-sem-2}.  Finally, $\hat{\bbs}_{t+1}=\hat{\bbs}^C$. 

\end{itemize}

\noindent The prediction step  \eqref{eq: steps-sem} instantiates \eqref{eq:prediction-step} to $\hat{\ccalT}= \prox_{\lambda g, \alpha_t} \circ (I - \alpha_t \nabla_{\bbs}\hat{f}) (\cdot)$. Similarly, the correction step \eqref{eq: steps-sem-correction} instantiates  \eqref{eq:correction-step} to $\ccalT= \prox_{\lambda g, \beta_t} \circ \ (I - \beta_t \nabla_{\bbs}f )(\cdot)$. The overall computational complexity of one PC iteration is dominated by the computation of matrix $\bbQ_t$, incurring a cost of $\ccalO(N^3)$. A correction-only algorithm would also incur a cost of $\ccalO(N^3)$ per iteration. See Appendix~\ref{app:complexity} for details.

\subsection{Time-Varying Smoothness-based Model}
The SBM model \eqref{eq:smoothness-estimation} adapted to a time-varying setting is:
\begin{subequations}
\begin{align}
\label{eq:f-smm}
        f(\bbS; t) &= \tr(\Diag(\bbS \boldsymbol{1})\empcov_t ) - \tr(\bbS\empcov_t)\\
        \label{eq:g-smm}
     g(\bbS;t) &=  \frac{\lambda_1}{4}\|\bbS\|_F^2 - \lambda_2 \boldsymbol{1}^\top \log (\bbS \boldsymbol{1}) + \iota_{\mathbb{\ccalS}}(\bbS)
\end{align}
    \end{subequations}
where  $\ccalS=\{\bbS \in \mathbb{S}^N \vert {\rm diag}({\bf S}) = {\bf 0},  S(i,j) \!=\!S(j,i) \geq 0, i \neq j\}$ is the set of hollow symmetric matrices. The log barrier term $\log (\bbS \boldsymbol{1})$ is applied entry-wise and forces the nodes degree vector $\bbd=\bbS \boldsymbol{1}$ to be positive while avoiding the trivial solution. The Frobenius norm term $\|\bbS\|_F^2$ controls the sparsity of the graph.

By operating in the hh-space, equations~\eqref{eq:f-smm} and \eqref{eq:g-smm} become\footnote{We move the log-barrier  and Frobenius norm terms of $g(\cdot)$ function \eqref{eq:g-smm} into the $f(\cdot)$ function to fit the structure of the general template.}:
\begin{subequations}
\begin{align}
\label{eq:f-smm-vec}
    f(\bbs;t) & \!= \bbs^\top(\bbK^\top \hat{\bbsigma}_d\!- \! 2\hat{\bbsigma}_t)\! -\! \lambda_2\boldsymbol{1}^\top \log (\bbK\bbs)\! +\! \frac{\lambda_1}{2} \|\bbs\|^2\\
    \label{eq:g-smm-vec}
    g(\bbs;t) & = \iota_{\mathbb{R}_+}(\bbs)
\end{align}
\end{subequations}
where $\bbK \in \{0,1\}^{N \times l}$ is the binary matrix such that $\bbd=\bbS\boldsymbol{1}= \bbK \bbs$, $\hat{\bbsigma}_d= \diag(\empcov_t)$ and $\hat{\bbsigma}_t= \vechh(\empcov_t)$. 

To apply the proposed framework to solve the time-varying SBM (TV-SBM) problem, we derive the gradient and the Hessian of  function $f(\cdot)$ in the hh-space as follows:
\begin{subequations}
\begin{align}
\label{eq:gradient-smm}
 \nabla_\bbs f(\bbs; t) &= \lambda_1\bbs - \lambda_2 \bbK^\top (\boldsymbol{1} \oslash \bbK\bbs) + \bbz_t  \\ \vspace{1cm}
\label{eq:hessian-smm}
 \nabla_{\bbs \bbs} f(\bbs; t) &= \lambda_1\bbI + \lambda_2 \bbK^\top \Diag(\boldsymbol{1}\oslash (\bbK\bbs)^{\circ 2})\bbK
\end{align}
\end{subequations}
where $\oslash$ and $^\circ$ represent the Hadamard division and power, respectively. The time derivative of the gradient is given by the partial mixed-order derivative:
\begin{align}
\label{eq:time-grad-sbm}
    \nabla_{t\bbs}f(\bbs; t)= \frac{1}{h} (\bbz_t - \bbz_{t-1})
\end{align}
where $\bbz_t= \bbK^\top \hat{\bbsigma}_d\!- \! 2\hat{\bbsigma}_t$.
Now, by defining $\hat{\bbs}_t:= \vechh(\hat{\bbS}_t) \in \mathbb{R}^l$, we can particularize Algorithm~\ref{alg:complete} to:
\begin{itemize}
    \item \textbf{Prediction:} with $\hat{\bbs}^{0}$  initialized as $\hat{\bbs}^{0}= \hat{\bbs}_t$, the prediction update is:
        \begin{align}
        \label{eq:prediction-smm}
            \hat{\bbs}^{p+1}
            &=\mathbb{P}_{\bbs \succeq \bb0}[\hat{\bbs}^{p} - 2\alpha_t (\nabla_\bbs f(\hat{\bbs}_{t} ; t) + \nonumber \\ &+ \nabla_{\bbs \bbs} f(\hat{\bbs}_{t} ; t)\left(\hat{\bbs}^{p}-\hat{\bbs}_{t}\right)  + h \nabla_{t \bbs} f(\hat{\bbs}_{t} ; t)  ) ]
        \end{align}
        for $p=0,1, \ldots, P-1$. Equation~\eqref{eq:prediction-smm} entails a descent step along the approximate function $\hat{f}(\cdot; t+1)$  in~\eqref{eq:approximate-function}, followed by the projection onto the non-negative orthant. Then, the prediction $\hat{\bbs}_{t+1 \mid t}$ is set to $\hat{\bbs}_{t+1 \mid t}= \hat{\bbs}^{P}$.

        \item \textbf{Correction}:  by setting  $\hat{\bbs}^{0}= \hat{\bbs}_{t+1 \mid t}$, the correction update is:
            \begin{equation}
            \label{eq:correction-smm}
                 \hat{\bbs}^{c+1}
            =\mathbb{P}_{\bbs \succeq \bb0} \left[ \hat{\bbs}^{c} - \beta_t \nabla f(\hat{\bbs}^c; t+1)  \right],
            \end{equation}
            for $c=0,1, \ldots, C-1$. Equation~\eqref{eq:correction-smm} entails a descent step along the true  function $f(\cdot; t+1)$,  followed by the projection onto the non-negative orthant.
            Finally,  $\hat{\bbs}_{t+1}= \hat{\bbs}^{C}$.
    \end{itemize}
    
    \noindent The prediction step \eqref{eq:prediction-smm}  instantiates \eqref{eq:prediction-step} to $\hat{\ccalT}= \mathbb{P}_{\bbs \succeq \bb0} \circ (I - \alpha_t \nabla_{\bbs} \hat{f} )(\cdot)$. Similarly, the correction step \eqref{eq:correction-smm} instantiates \eqref{eq:correction-step} to  $\ccalT= \mathbb{P}_{\bbs \succeq \bb0} \circ (I - \beta_t \nabla_{\bbs}f )(\cdot)$. The overall computational complexity per iteration is dominated by the computation  of the gradient $\nabla_\bbs f(\bbs; t)$ (or the Hessian if $P>1$), incurring a cost of $\ccalO(N^2)$ (or $\ccalO(N^3)$ if $P>1$). See Appendix~\ref{app:complexity} for details.

\section{Convergence Analysis}
\label{sec:convergence-analysis}
In this section, we first discuss the convergence of Algorithm~\ref{alg:complete} and the associated error bounds. As solver we consider the proximal gradient  $\hat{\ccalT}=\ccalT=\prox_{g,\rho} \circ (I - \rho \nabla_\bbs f)(\cdot)$~\cite{ryu2016primer, combettes2011proximal}.  Then, we show how the parameters of the three introduced models are involved in the bounds. To ease notation, we use $\bbs \in \mathbb{R}^{p}$ to indicate the  vectorization of matrix variable $\bbS \in \mathbb{R}^{N \times N}$ [cf. Section~\ref{subsec:reduction}].

For this analysis, we need the following mild assumptions.

\begin{assumption} \label{ass1}
The function $ f:\mathbb{R}^{p} \times {\mathbb N}_+ \rightarrow {\mathbb R}$ is $m$-strongly convex and $L$-smooth  uniformly in $t$, i.e., ${m \bbI \preceq \nabla_{\bbs \bbs} f(\bbs;t) \preceq L \bbI, \;\; \forall \; \bbs, t}$,
%
while the function $g:\mathbb{R}^{p} \times {\mathbb N}_+ \rightarrow \mathbb{R} \cup \{+ \infty\}$ is closed convex and proper, or $g(\cdot; t)=0 $, for all $t \in \mathbb{N_+}$.
\end{assumption}
This guarantees that problem~\eqref{eq:time-varying} admits a unique solution for each time instant, which in turn guarantees uniqueness of the solution trajectory $\{\bbs_t^\star\}_{t=1}^\infty$.

\begin{assumption}\label{ass2}
The gradient of function $f(\cdot)$ has bounded time derivative, i.e. $\exists \; C_0 >0 $ such that $\|\nabla_{t\bbs}f(\bbs;t)\| \leq C_0 \;  \forall \; \bbs \in \mathbb{R}^p, t \in \mathbb{N}_+$. 
\end{assumption}
This guarantees that the solution trajectory is Lipschitz in time.

\begin{assumption}\label{ass3}
The predicted function $\hat{f}(\cdot; t+1)$ is $m$-strongly convex and $L$-smooth uniformly in $t$; and $\hat{g}(\cdot; t+1)$ is closed, convex and proper. 
\end{assumption}
This implies that the prediction problem \eqref{eq:prediction} belongs to the same class as the original problem, i.e., the functions of the two problems share the same strong convexity and Lipschitz constants $m$ and $L$.  Therefore, the same solver can be applied for the prediction and correction steps, i.e., $\hat{\ccalT}= \ccalT$.

\begin{assumption}
\label{ass:graph}
The matrix $\bbS$ of \eqref{eq: composite} has finite entries, i.e.,  $- \infty \!<S(i,j)\!<\! + \infty$, for all $i,j$.
\end{assumption}

 This guarantees $\|\bbS\|  < + \infty$, i.e., $\bbS$ is a bounded operator, and it holds in practical scenarios. In particular, it is known that (finite) weighted graphs exhibit bounded eigenvalues, see \cite{zhan2005extremal}\cite{das2008sharp}. Notably, if $\bbS$ is a normalized Laplacian, then $\|\bbS\|\!=\! 2$.

\smallskip\noindent Similarly, assumptions~\ref{ass1}-\ref{ass3} are mild and hold for the considered models, as we show next. 

\begin{prop}
The three considered models of Section~\ref{sec:network-models} can be $m$-strongly convex and $L$-smooth uniformly in $t$, for some scalar $m$ and $L$, as supported by the following claims.
 
 \begin{claim}
Denote with $\xi>0$ and $0 < \chi < \infty$ the minimum and maximum admissible eigenvalues of the precision matrix $\bbS$, respectively;  i.e., consider the set $\ccalS=\{\bbS \in \mathbb{S}_{++}^N \vert \xi \bbI \! \preceq \! \bbS \! \preceq \! \chi \bbI \}$. Then, for the TV-GGM function $f(\cdot;t)$ in \eqref{eq:f-ggm}, it holds:
 \begin{align}
    m = 1/\chi \quad \quad L= 2/ \xi.
\end{align}
 \end{claim}

 \begin{claim}
Denote with $\lambda_{\text{min}}$ and $\lambda_{\text{max}}$ the smallest and highest eigenvalues for the set of empirical covariance matrices obtained with graph signals obeying~\eqref{eq:sem-data-model}. Then, for the TV-SEM function $f(\cdot;t)$ in \eqref{eq:f-sem-vec}, it holds:
\begin{align}
    m =\lambda_\text{min} \quad \quad L = 2 \lambda_{\text{max}}.
\end{align}
\end{claim}

\begin{claim}
Consider the TV-SBM function $f(\cdot;t)$ in \eqref{eq:f-smm-vec}, and recall that the log-barrier term avoids isolated vertices, i.e., $\bbd \succ \bb0$. Denote with $d_\text{min} > 0$ the minimum degree of the GSO search space. Under these assumptions, it holds:
\begin{align}
    m = 2 \lambda_1 \quad \quad L = 2 \lambda_2 (N-1) d_\text{min}^{-2}.
\end{align}
See Appendix~\ref{proof:bounds} for a proof of Claim~1-3.
\end{claim}
\end{prop}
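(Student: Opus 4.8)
The plan is to establish, for each model, the uniform two-sided spectral bound $m\bbI \preceq \nabla_{\bbs\bbs} f(\bbs;t) \preceq L\bbI$ holding for every feasible $\bbs$ and every $t$; since each $f(\cdot;t)$ is twice continuously differentiable, this is exactly $m$-strong convexity together with $L$-smoothness, uniformly in $t$. Thus the whole proposition reduces to reading off the extreme eigenvalues of the explicit Hessians in \eqref{eq:hessian-ggm}, \eqref{eq:hessian-sem} and \eqref{eq:hessian-smm} over the respective feasible sets. The two recurring ingredients are the spectral mapping for Kronecker products, $\lambda_{ij}(\bbA\otimes\bbB)=\lambda_i(\bbA)\lambda_j(\bbB)$, and the spectra of the (hollow) duplication/selection matrices: $\bbD^\top\bbD$ has eigenvalues in $\{1,2\}$ (diagonal versus off-diagonal entries), while $\bbD_h^\top\bbD_h=2\bbI$, and for the degree map $\bbK^\top\bbK$ has $\lambda_{\max}=2(N-1)$ (its nonzero spectrum coincides with that of $\bbK\bbK^\top=(N-2)\bbI+\boldsymbol{1}\boldsymbol{1}^\top$).

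For the GGM (Claim 1), I would insert the eigenvalue box $\xi\bbI\preceq\bbS\preceq\chi\bbI$ defining $\ccalS$ into the Kronecker factor of \eqref{eq:hessian-ggm}: the eigenvalues of $(\bbS\otimes\bbS)^{-1}$ then lie in the interval determined by $\xi$ and $\chi$, and conjugation by the duplication matrix $\bbD$ transfers these bounds, up to the $\{1,2\}$ spectrum of $\bbD^\top\bbD$, to $\nabla_{\bbs\bbs}f$, producing the stated $m$ and $L$. Crucially the Hessian is data-independent here, so the bound is automatically uniform in $t$.

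For the SEM (Claim 2), the Hessian is the constant-in-$\bbs$ matrix $\bbQ_t=\bbD_h^\top(\empcov_t\otimes\bbI)\bbD_h$. Since $\empcov_t\otimes\bbI$ inherits the spectrum of $\empcov_t$ and $\bbD_h^\top\bbD_h=2\bbI$, the extreme eigenvalues of $\bbQ_t$ are governed by $\lambda_{\text{min}}$ and $\lambda_{\text{max}}$ of $\empcov_t$; equivalently one may sandwich the quadratic form $\bbs^\top\bbQ_t\bbs=\tr(\bbS^2\empcov_t)$ between $\lambda_{\text{min}}\|\bbS\|_F^2$ and $\lambda_{\text{max}}\|\bbS\|_F^2$ and use $\|\bbS\|_F^2=2\|\bbs\|^2$. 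The delicate point, and the main obstacle, is that these bounds must hold uniformly in $t$: I would take $\lambda_{\text{min}},\lambda_{\text{max}}$ as uniform extreme eigenvalues over the whole family $\{\empcov_t\}$ and argue that the recursive covariance update keeps $\empcov_t$ well conditioned (bounded away from singular), which is what secures $m>0$ for all $t$.

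For the SBM (Claim 3), I would split \eqref{eq:hessian-smm} as $\lambda_1\bbI$ plus the positive-semidefinite log-barrier term $\lambda_2\bbK^\top\Diag(\boldsymbol{1}\oslash(\bbK\bbs)^{\circ2})\bbK$. The first summand alone fixes the strong-convexity constant $m$ (proportional to $\lambda_1$), since the second is PSD. For $L$, I would use $\bbd=\bbK\bbs\succeq d_{\text{min}}\boldsymbol{1}$ to bound $\Diag(\boldsymbol{1}\oslash\bbd^{\circ2})\preceq d_{\text{min}}^{-2}\bbI$ and then $\bbK^\top\Diag(\cdots)\bbK\preceq d_{\text{min}}^{-2}\lambda_{\max}(\bbK^\top\bbK)\bbI=2(N-1)d_{\text{min}}^{-2}\bbI$, giving the stated $L$ (the residual $\lambda_1$ being absorbed as lower order). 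Across all three claims the only nonroutine parts are the careful bookkeeping of the factors of two contributed by $\bbD,\bbD_h,\bbK$ and, for the SEM, securing the uniform lower eigenvalue bound on the streaming covariance.
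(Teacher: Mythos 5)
Your proposal follows essentially the same route as the paper's Appendix~\ref{proof:bounds}: bound the Hessian spectrum via the Kronecker spectral mapping together with the spectra of the reduction matrices ($\bbD^\top\bbD$ with eigenvalues in $\{1,2\}$, $\bbD_h^\top\bbD_h=2\bbI$) and of the degree map ($\|\bbK\|^2=2(N-1)$, which the paper imports from a cited lemma and you derive directly from $\bbK\bbK^\top=(N-2)\bbI+\boldsymbol{1}\boldsymbol{1}^\top$ --- a nice self-contained touch). One caveat: your (correct) bookkeeping does not actually produce the stated constants, contrary to what you assert. For the GGM, sandwiching $(\bbS\otimes\bbS)^{-1}$ between $\chi^{-2}\bbI$ and $\xi^{-2}\bbI$ and conjugating by $\bbD$ yields $m=1/\chi^2$ and $L=2/\xi^2$, not $1/\chi$ and $2/\xi$; for the SBM, the Hessian split $\lambda_1\bbI+(\text{PSD barrier term})$ yields $m=\lambda_1$, not $2\lambda_1$. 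These mismatches are inherited from the paper itself, whose appendix derives $1/\chi^2$ for the GGM lower bound while the claim states $1/\chi$, evaluates $\|\bbS^{-1}\otimes\bbS^{-1}\|$ as $1/\xi$ where it should be $1/\xi^2$, and drops a factor of two in the SBM strong-convexity computation (and, like you, omits the additive $\lambda_1$ from the SBM smoothness constant). Finally, your remark that the SEM bound needs the streaming covariance $\empcov_t$ to stay uniformly well conditioned (in particular nonsingular, so that $\lambda_{\text{min}}>0$) is a legitimate subtlety: the paper sidesteps it by simply positing $\lambda_{\text{min}}$ and $\lambda_{\text{max}}$ as the extreme eigenvalues over the whole family, exactly as the claim's statement does.
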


Thus, Assumption~\ref{ass1} holds since the Hessian of $f(\cdot;t)$ is bounded over time and $g(\cdot; t)$ is closed, convex and proper by problem construction; Assumption~\ref{ass2}  holds since $\nabla_{t\bbs}f(\bbs;t)$ is the difference between bounded vectors which involve covariance matrices not too different from each other (one is the rank-one update of the other), which is finite as long as the graph signals are bounded, see~\eqref{empcov-rule} and, e.g., ~\eqref{eq:time-grad-sbm}. Assumption~\ref{ass3} holds since $\hat{f}(\cdot; t+1)$ is a quadratic approximation of $f(\cdot;t)$ [cf.~\eqref{eq:approximate-function}] and $\hat{g}(\cdot; t+1)= g(\cdot;t)$, thus inheriting the properties of $f(\cdot; t)$ and $g(\cdot; t)$, which satisfy Assumption~\ref{ass1}.

With this in place, we are now ready to show two different error bounds incurred during the prediction and correction steps performed by Algorithm~\ref{alg:complete}, describing its sub-optimality as function of the model and algorithm's parameters. First, we show the error bound between the optimal prediction solution  $\bbs_{t+1 \vert t}^\star$ and the associated optimal correction $\bbs_{t+1}^\star$, which solve problems \eqref{eq:prediction} and  \eqref{eq:correction}, respectively.

\begin{prop}
 Let Assumptions~\ref{ass1}-\ref{ass3} hold. Consider also the Taylor expansion based prediction~\eqref{eq:approximate-function} for $f(\cdot; t)$ and the one-step back prediction for $g(\cdot; t)$. Then, the distance between the optimal prediction solution  $\bbs_{t+1 \vert t}^\star$, solving problem \eqref{eq:prediction}, and the associated optimal correction $\bbs_{t+1}^\star$, solving problem \eqref{eq:correction}, is upper bounded by:
\begin{align}
\label{eq:instantaneous-error}
    \| \bbs_{t+1 \vert t}^\star - \bbs_{t+1}^\star \| \leq \frac{2L}{m} \|\hat{\bbs}_t - \bbs_t^\star\| + \frac{2 C_0 h}{m}(1 + \frac{L}{m})
\end{align}
where $\hat{\bbs}_t$ is the approximate solution of the correction problem \eqref{eq:correction} at time $t$.
\end{prop}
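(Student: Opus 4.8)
The plan is to reduce the statement to a single \emph{minimizer-sensitivity} estimate and then control the gradient mismatch introduced by the Taylor-based prediction. The starting observation is that, under the one-step-back prediction $\hat{g}(\cdot;t+1)=g(\cdot;t)=g$, both $\bbs_{t+1\vert t}^\star$ and $\bbs_{t+1}^\star$ are minimizers of composite functions that share the \emph{same} nonsmooth part $g$; they differ only in their smooth parts, $\hat{f}(\cdot;t+1)$ and $f(\cdot;t+1)$. Writing the first-order optimality conditions $-\nabla\hat f(\bbs_{t+1\vert t}^\star;t+1)\in\lambda\partial g(\bbs_{t+1\vert t}^\star)$ and $-\nabla f(\bbs_{t+1}^\star;t+1)\in\lambda\partial g(\bbs_{t+1}^\star)$, invoking monotonicity of $\partial g$, and using the $m$-strong convexity of $\hat f$ (Assumption~\ref{ass3}), a Cauchy--Schwarz step yields
\begin{align}
\|\bbs_{t+1\vert t}^\star-\bbs_{t+1}^\star\|\le \frac1m\big\|\nabla\hat f(\bbs_{t+1}^\star;t+1)-\nabla f(\bbs_{t+1}^\star;t+1)\big\|.
\end{align}
This reduces the proposition to bounding the gradient mismatch $E:=\|\nabla\hat f(\bbs_{t+1}^\star;t+1)-\nabla f(\bbs_{t+1}^\star;t+1)\|$ evaluated at the correction optimum.

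Next I would substitute the explicit gradient of the quadratic model, $\nabla\hat f(\bbs;t+1)=\nabla_\bbs f(\hat\bbs_t;t)+\nabla_{\bbs\bbs}f(\hat\bbs_t;t)(\bbs-\hat\bbs_t)+h\nabla_{t\bbs}f(\hat\bbs_t;t)$ [cf.~\eqref{eq:approximate-function}], and split $E$ into a \emph{spatial} part and a \emph{temporal} part by inserting $\nabla_\bbs f(\bbs_{t+1}^\star;t)$. The spatial part, $\nabla_\bbs f(\hat\bbs_t;t)+\nabla_{\bbs\bbs}f(\hat\bbs_t;t)(\bbs_{t+1}^\star-\hat\bbs_t)-\nabla_\bbs f(\bbs_{t+1}^\star;t)$, is a first-order Taylor residual of $\nabla_\bbs f(\cdot;t)$ about $\hat\bbs_t$; the temporal part collects $h\nabla_{t\bbs}f(\hat\bbs_t;t)$ together with the exact increment $\nabla_\bbs f(\bbs_{t+1}^\star;t+1)-\nabla_\bbs f(\bbs_{t+1}^\star;t)=h\nabla_{t\bbs}f(\bbs_{t+1}^\star;t+1)$ coming from the discrete-time reading of the mixed derivative.

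The two parts are then bounded separately. For the temporal part, Assumption~\ref{ass2} gives $\|h\nabla_{t\bbs}f(\hat\bbs_t;t)\|+\|h\nabla_{t\bbs}f(\bbs_{t+1}^\star;t+1)\|\le 2C_0 h$. For the spatial part, I would use the first-order estimates $\|\nabla_\bbs f(\hat\bbs_t;t)-\nabla_\bbs f(\bbs_{t+1}^\star;t)\|\le L\|\hat\bbs_t-\bbs_{t+1}^\star\|$ and $\|\nabla_{\bbs\bbs}f(\hat\bbs_t;t)(\bbs_{t+1}^\star-\hat\bbs_t)\|\le L\|\bbs_{t+1}^\star-\hat\bbs_t\|$ afforded by Assumption~\ref{ass1}, giving $2L\|\hat\bbs_t-\bbs_{t+1}^\star\|$. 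I then peel off the anchor error via the triangle inequality $\|\hat\bbs_t-\bbs_{t+1}^\star\|\le\|\hat\bbs_t-\bbs_t^\star\|+\|\bbs_t^\star-\bbs_{t+1}^\star\|$, and control the trajectory drift $\|\bbs_t^\star-\bbs_{t+1}^\star\|\le C_0 h/m$ by applying the very same sensitivity lemma to $f(\cdot;t)$ versus $f(\cdot;t+1)$ (this is exactly the Lipschitz-in-time property promised by Assumption~\ref{ass2}). Collecting $\tfrac{2L}{m}(\|\hat\bbs_t-\bbs_t^\star\|+C_0 h/m)$ from the spatial part and $\tfrac{2C_0 h}{m}$ from the temporal part and regrouping reproduces $\tfrac{2L}{m}\|\hat\bbs_t-\bbs_t^\star\|+\tfrac{2C_0 h}{m}(1+L/m)$.

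The main obstacle is the spatial term: it is tempting to treat it as a genuine Taylor remainder, which would be second order in $\|\bbs_{t+1}^\star-\hat\bbs_t\|$ and hence $O(h^2)$, but that requires Lipschitz continuity of the Hessian, which is \emph{not} among the stated assumptions. The key is therefore to accept the weaker $O(\|\cdot\|)$ estimate — this is precisely what produces the factor $2L/m$ — and to route the distance $\|\hat\bbs_t-\bbs_{t+1}^\star\|$ through $\bbs_t^\star$ so that it splits cleanly into the tracking error $\|\hat\bbs_t-\bbs_t^\star\|$ and an $O(h)$ drift. Some care is also needed to keep the discrete-time interpretation of $\nabla_{t\bbs}f$ consistent, so that the increment $\nabla_\bbs f(\cdot;t+1)-\nabla_\bbs f(\cdot;t)$ equals $h\nabla_{t\bbs}f$, and to check that the shared subdifferential $\partial g$ is what makes the monotonicity argument close.
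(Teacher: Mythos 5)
Your proof is correct, but it takes a genuinely different route from the paper's: the paper does not derive the bound at all --- it simply invokes \cite[Lemma 4.2]{bastianello2020primal}, specialized to a static $g(\cdot)$ so that the constant $D_0$ appearing there vanishes --- whereas you reconstruct that lemma from first principles. Your three ingredients, namely (i) the monotonicity/strong-convexity sensitivity estimate $\|\bbs_{t+1\vert t}^\star-\bbs_{t+1}^\star\|\le \frac{1}{m}\|\nabla\hat f(\bbs_{t+1}^\star;t+1)-\nabla f(\bbs_{t+1}^\star;t+1)\|$, which is exactly where the shared time-invariant subdifferential $\lambda\partial g$ plays the role of the paper's ``$D_0=0$''; (ii) the spatial/temporal splitting of the gradient mismatch, bounded by $2L\|\hat\bbs_t-\bbs_{t+1}^\star\|$ and $2C_0 h$ via Assumptions~\ref{ass1} and~\ref{ass2} together with the discrete-time identity $\nabla_\bbs f(\bbs;t+1)-\nabla_\bbs f(\bbs;t)=h\nabla_{t\bbs}f(\bbs;t+1)$; and (iii) re-anchoring through $\bbs_t^\star$ with the drift bound $\|\bbs_t^\star-\bbs_{t+1}^\star\|\le C_0 h/m$ (the same sensitivity estimate applied to $f(\cdot;t)$ versus $f(\cdot;t+1)$) --- combine to give $\frac{2L}{m}\|\hat\bbs_t-\bbs_t^\star\|+\frac{2LC_0h}{m^2}+\frac{2C_0h}{m}$, which is exactly \eqref{eq:instantaneous-error}. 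Your remark about the spatial term is also on point: absent Hessian Lipschitzness one cannot obtain an $O(h^2)$ Taylor-remainder bound, and settling for the first-order estimate is precisely what produces the factor $2L/m$ multiplying the tracking error. What the citation buys the paper is brevity; what your derivation buys is transparency --- it makes explicit where each of Assumptions~\ref{ass1}--\ref{ass3} enters, that strong convexity of the \emph{predicted} function (Assumption~\ref{ass3}) is what drives the sensitivity step, and why the one-step-back prediction of $g$ is essential for the monotonicity argument to close.
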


\begin{proof}
Follows from \cite[Lemma 4.2]{bastianello2020primal} in which  constant $D_0=0$  by considering a static function $g(\cdot)$.
\end{proof}

This bound enables us to measure how far the prediction is from the true corrected topology at time $t+1$. It depends on the estimation error $\hat{\bbs}_t - \bbs_t^\star$ achieved at time $t$, the ratio $L/m$ and the variability of the function gradient $\nabla_{t\bbs}f(\bbs;t)$. The bound suggests that a small gap can be achieved if \textit{i)} the ratio $L/m$ is small, which for the three considered models translates in having a small condition number for the involved covariance matrices or GSOs; and \textit{ii)} the time-gradient $\nabla_{t\bbs}f(\bbs;t)$ at consecutive time steps does not change significantly, which holds when the considered models have similar covariance matrices at adjacent time instants, i.e., the data statistics do not change too rapidly (see e.g.~\eqref{eq: time-derivative-ggm} and \eqref{eq:time-gradient-sem}).

\smallskip\noindent  

Finally, we bound the error sequence $\{ \|\hat{\bbs}_t - \bbs_t^\star\|_2, t=1, 2, \ldots\}$ achieved by Algorithm~\ref{alg:complete} by means of the following non-asymptotic performance guarantee, which is an adaptation of \cite[Proposition 5.1]{bastianello2020primal}.
\begin{theorem}
\label{theorem:error}
Let Assumptions~\ref{ass1} and \ref{ass3} hold, and consider two scalars $\{d_t, \phi_t\} \in \reals_{+}$ such that:
\begin{align}
    \|\bbs_{t+1}^\star - \bbs_t^\star\|  \leq d_t \quad \text{and} \quad 
    \| \bbs_{t+1 \vert t}^\star - \bbs_{t+1}^\star \| & \leq \phi_t
\end{align}
for any $t \in \mathbb{N}_+$. Let also the prediction and correction steps use the same step-sizes $\rho_t= \alpha_t= \beta_t$. Then, by employing $P$ prediction and $C$ correction steps with the proximal gradient operator $\ccalT=\prox_{g,\rho_t} \circ (I - \rho_t \nabla_\bbs f)(\cdot)$, the sequence of iterates $\{\hat{\bbs}_{t}\}$ generated by Algorithm~\ref{alg:complete} satisfies:
\begin{align}
    \label{eq:error-bound}
    \|\hat{\bbs}_{t+1} \!-\! \bbs_{t+1}^\star\|_2 \!\leq\! q_t^C (q_t^P \|\hat{\bbs}_{t} - \bbs_{t}^\star\| \!+\! q_t^P d_t \!+\! (1+ q_t^P) \phi_t)
\end{align}
where $q_t= \max\{|1 - \rho_t m_t|, |1- \rho_t L_t| \} \in (0,1)$ is the contraction coefficient \cite{beck2017first}.  
\end{theorem}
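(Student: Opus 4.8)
The plan is to treat the prediction and correction phases separately and chain their contraction estimates. Under Assumptions~\ref{ass1} and \ref{ass3}, both the prediction operator $\hat{\ccalT}$ and the correction operator $\ccalT = \prox_{g,\rho_t} \circ (I - \rho_t \nabla_\bbs f)(\cdot)$ are $q_t$-contractions sharing the \emph{same} coefficient $q_t = \max\{|1-\rho_t m_t|, |1-\rho_t L_t|\} \in (0,1)$, and by construction $\bbs_{t+1\vert t}^\star$ and $\bbs_{t+1}^\star$ are their respective fixed points (cf. \eqref{eq:prediction}, \eqref{eq:correction}). The contraction claim rests on the standard fact \cite{beck2017first} that for an $m_t$-strongly convex, $L_t$-smooth $f$ the gradient map $I - \rho_t \nabla_\bbs f$ is a $q_t$-contraction; composing it with the (firmly) nonexpansive proximal map $\prox_{g,\rho_t}$ preserves the rate $q_t$. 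This is the only place where the step-size restriction guaranteeing $q_t<1$ enters.

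For the prediction phase, $\hat{\bbs}_{t+1\vert t} = \hat{\ccalT}^P \hat{\bbs}_t$, so contracting towards the fixed point $\bbs_{t+1\vert t}^\star$ gives $\|\hat{\bbs}_{t+1\vert t} - \bbs_{t+1\vert t}^\star\| \le q_t^P \|\hat{\bbs}_t - \bbs_{t+1\vert t}^\star\|$. I would then bound the right-hand factor with the triangle inequality $\|\hat{\bbs}_t - \bbs_{t+1\vert t}^\star\| \le \|\hat{\bbs}_t - \bbs_t^\star\| + \|\bbs_t^\star - \bbs_{t+1}^\star\| + \|\bbs_{t+1}^\star - \bbs_{t+1\vert t}^\star\|$ and invoke the defining bounds $d_t$ and $\phi_t$ on the last two summands, yielding $\|\hat{\bbs}_{t+1\vert t} - \bbs_{t+1\vert t}^\star\| \le q_t^P(\|\hat{\bbs}_t - \bbs_t^\star\| + d_t + \phi_t)$.

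For the correction phase I first re-express the predicted iterate's error relative to the correction target $\bbs_{t+1}^\star$ via one more triangle inequality, $\|\hat{\bbs}_{t+1\vert t} - \bbs_{t+1}^\star\| \le q_t^P(\|\hat{\bbs}_t - \bbs_t^\star\| + d_t + \phi_t) + \phi_t$. Since $\hat{\bbs}_{t+1} = \ccalT^C \hat{\bbs}_{t+1\vert t}$ with $\bbs_{t+1}^\star$ the fixed point of $\ccalT$, applying the contraction $C$ times gives $\|\hat{\bbs}_{t+1} - \bbs_{t+1}^\star\| \le q_t^C \|\hat{\bbs}_{t+1\vert t} - \bbs_{t+1}^\star\|$. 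Substituting the previous estimate and grouping the prediction-gap terms as $q_t^P \phi_t + \phi_t = (1+q_t^P)\phi_t$ reproduces exactly the claimed bound~\eqref{eq:error-bound}.

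The argument is essentially bookkeeping once the contraction factor is available, so the only real obstacle is justifying that $\hat{\ccalT}$ and $\ccalT$ contract with a \emph{common} rate $q_t<1$: here Assumption~\ref{ass3} is essential, since it forces the predicted problem to inherit the same $m_t,L_t$ as the original, so a single $q_t$ governs both phases. A secondary point to verify is the fixed-point characterization $\bbs_{t+1\vert t}^\star = \hat{\ccalT}\bbs_{t+1\vert t}^\star$ and $\bbs_{t+1}^\star = \ccalT\bbs_{t+1}^\star$, which is immediate from the optimality conditions of \eqref{eq:prediction} and \eqref{eq:correction} under the proximal-gradient fixed-point equivalence.
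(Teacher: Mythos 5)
Your proof is correct, and it reaches the bound by exactly the route that underlies the paper's own (one-line) proof: the paper simply cites \cite{bastianello2020primal} — Lemma~2.5 there is your contraction property of $\prox_{g,\rho_t} \circ (I - \rho_t \nabla_\bbs f)(\cdot)$ under $m$-strong convexity and $L$-smoothness, and Proposition~5.1 is your chaining argument (prediction contraction toward $\bbs_{t+1\vert t}^\star$, triangle inequalities introducing $d_t$ and $\phi_t$, then $C$ correction contractions toward $\bbs_{t+1}^\star$), instantiated with their $\lambda = q_t$ and $\chi = \beta = 1$. So you have not taken a different route; you have unpacked the citation into a self-contained derivation, and the bookkeeping ($q_t^P \phi_t + \phi_t = (1+q_t^P)\phi_t$) lands exactly on \eqref{eq:error-bound}. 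Two small points worth making explicit if this were to stand alone: first, $q_t \in (0,1)$ is not automatic but requires the step-size restriction $\rho_t < 2/L_t$ (the paper only notes this after the theorem, when discussing when \eqref{eq:error-bound} is a contraction); second, your appeal to Assumption~\ref{ass3} is the right justification that the prediction operator contracts with the \emph{same} $q_t$ — since $\hat{f}(\cdot;t+1)$ is the quadratic in \eqref{eq:approximate-function} whose Hessian is $\nabla_{\bbs\bbs} f(\hat{\bbs}_t;t)$, Assumption~\ref{ass1} already pins its spectrum in $[m,L]$, so this is consistent.
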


\begin{proof}
Follows from \cite[Proposition 5.1]{bastianello2020primal} and \cite[Lemma 2.5]{bastianello2020primal}, with variables $\lambda= q_t$ and $\chi=\beta=1$. 
\end{proof}
Theorem~\eqref{eq:error-bound} states that the sequence of estimated graphs $\{\bbs_t\}_{t \in \mathbb{N}_+}$ hovers around the optimal trajectory $\{\bbs_t^\star\}_{t \in \mathbb{N}_+}$ with a distance depending on: \textit{i)} the numbers $P$ and $C$ of iterations; \textit{ii)} the estimation error achieved at the previous time instant $\|\hat{\bbs}_{t} - \bbs_{t}^\star\|$; and \textit{iii)} the quantities $d_t$ and $\phi_t$.
Moreover, \eqref{eq:error-bound} is a contraction (i.e., $q_t^{C+P}<1$) when $\rho_t < 2/L_t$; in this case the initial starting point $\hat{\bbs}_0$ does not influence the error $\hat{\bbs}_{t+1} \!-\! \bbs_{t+1}^\star$ asymptotically, since the first term in~\eqref{eq:error-bound} vanishes. However, the terms $d_t$ and $\phi_t$ keep impacting the error also asymptotically, as long as the problem is time-varying;  if the problem becomes static, i.e., the solution stops varying, then $d_t=\phi_t=0$, and the overall error asymptotically goes to zero.

\section{Numerical Results}
\label{sec:numerical-result}

In this section, we show with numerical results how Algorithm~\ref{alg:complete}, specialized to the three models (TV-GGM, TV-SEM, TV-SBM), can track the offline solution~\eqref{eq:time-varying} obtained by the respective instantiations.  For all the experiments, we initialize the empirical covariance matrix $\empcov_0$  with some samples acquired prior to the analysis. We consider $P=1$ prediction steps and $C=1$ correction steps, which is the challenging setting of having the minimum iteration budget  for streaming scenarios. We measure the convergence of Algorithm~\ref{alg:complete} via the normalized squared error (NSE) between the algorithm's estimate $\hat{\bbs}_t$ and the optimal (offline) solution $\bbs_t^\star$:
\begin{align}
    \text{NSE}(\hat{\bbs}_t, \bbs_t^\star) = \frac{\|\hat{\bbs}_{t} - \bbs_{t}^\star\|_2^2}{\|\bbs_{t}^\star\|_2^2}.
\end{align}
We use CVX \cite{grant2008cvx} as solver for the offline computations, and report the required computational time in seconds achieved by Algorithm~\ref{alg:complete} and CVX.


\begin{figure*}%
\centering
\begin{subfigure}{0.33\textwidth}
\centering
\captionsetup{justification=centering}
\vspace{.1cm}
\includegraphics[width=.87\textwidth, trim= 0.6cm 0.1cm 1cm 0.75cm, clip=true]{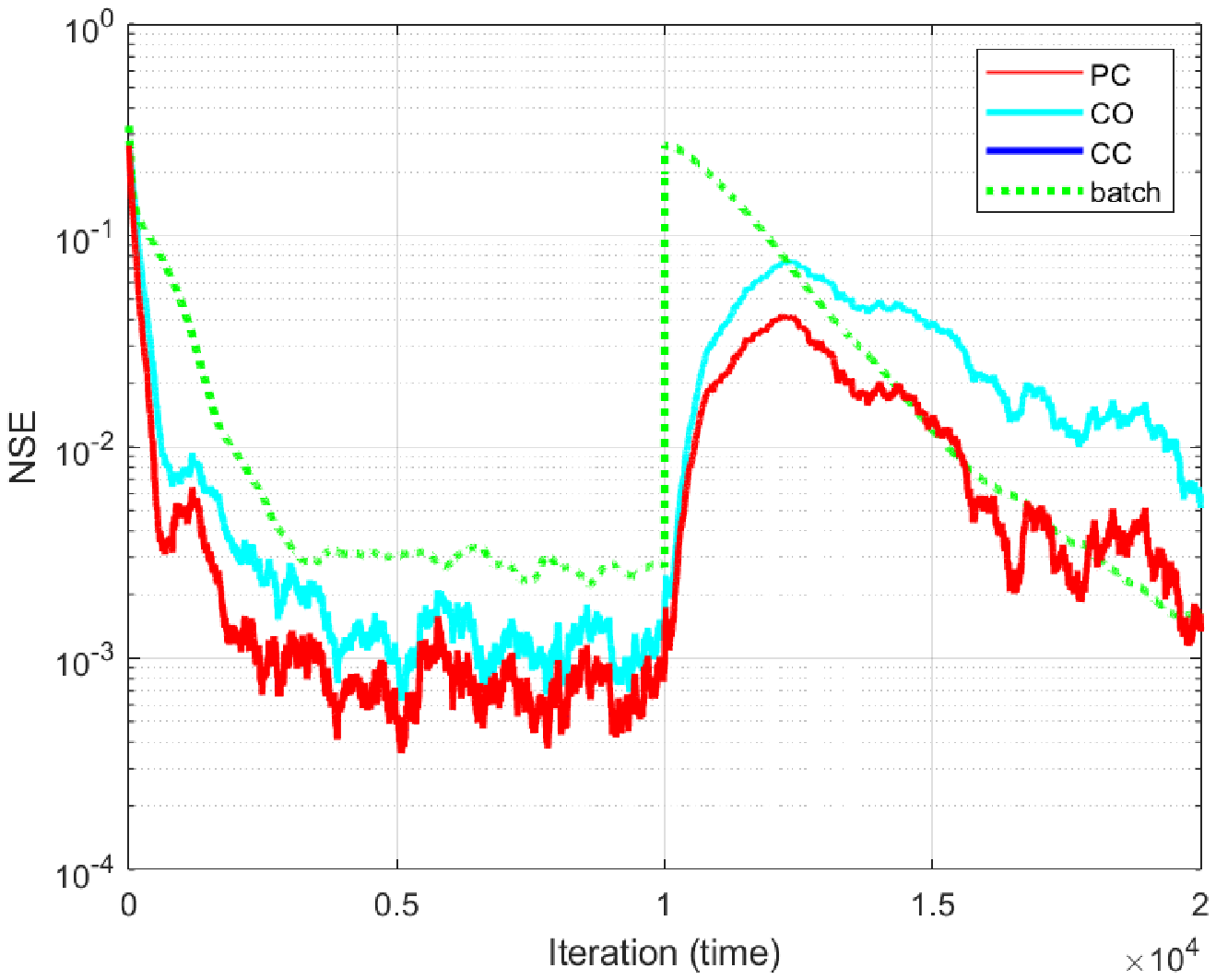}%
\vspace{-.2cm}
\caption{\textbf{TV-GGM.} $N\!=\!18$, $\alpha\!=\beta\!=\!10^{\!-\!2}\!$, $\gamma\!=\!99.9\!\times\!10^{\!-\!2}$ \newline}%
\label{ltop}%
\end{subfigure}%
\begin{subfigure}{0.33\textwidth}
\centering
\captionsetup{justification=centering}
\includegraphics[width=.95\textwidth, trim= 0 0 0 5mm, clip=true]{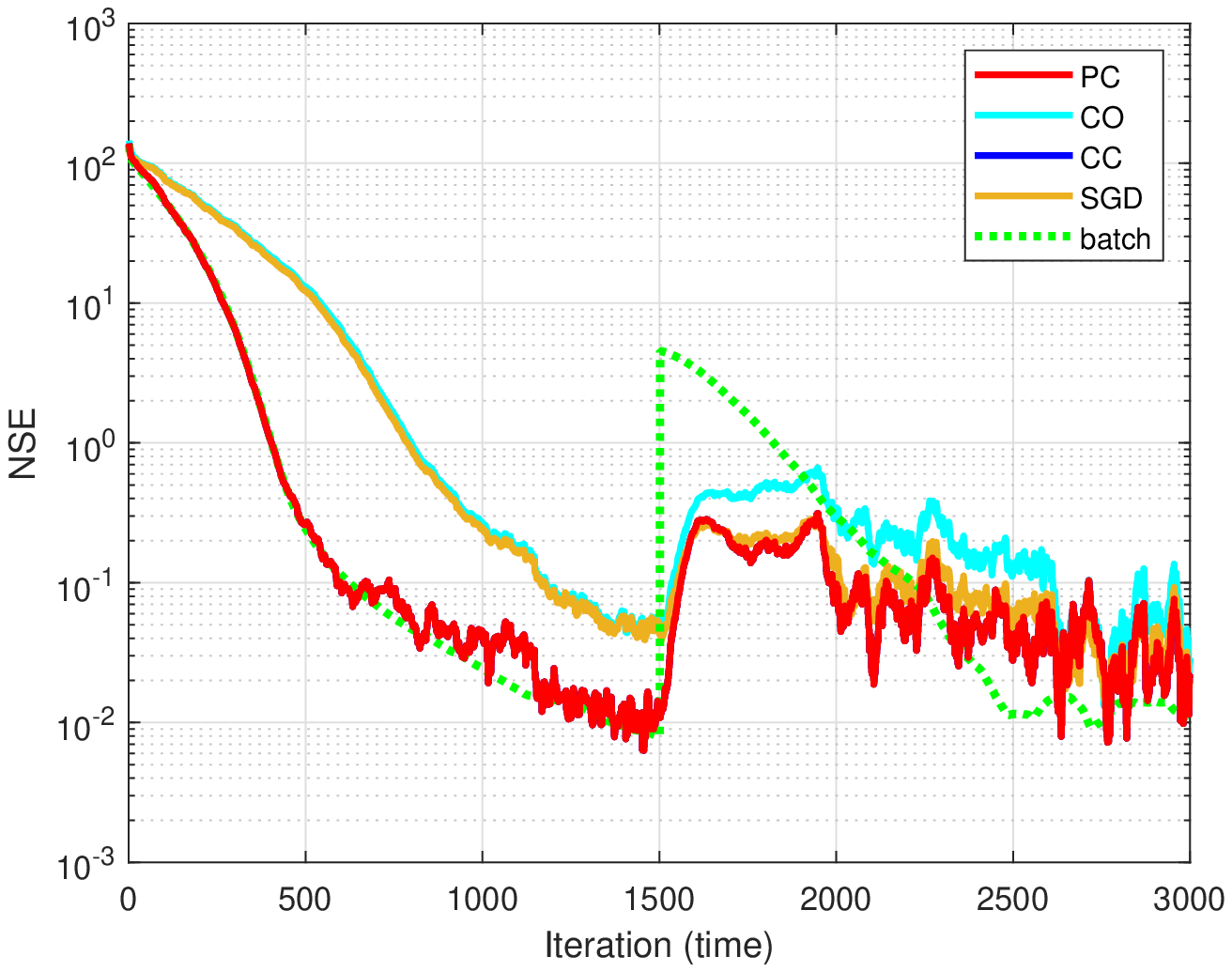}
\vspace{-.2cm}
\caption{\textbf{TV-SEM.} $N\!=\!28$, $\alpha\!=\beta\!=\!0.1\!\times\!10^{\!-\!2} $, \newline  $\lambda\!=\!0.5$, $\gamma\!=\!99\!\times\!10^{\!-\!2}$}%
\label{ctop}%
\end{subfigure}%
\begin{subfigure}{0.33\textwidth}
\centering
\captionsetup{justification=centering}
\includegraphics[width=.95\textwidth, trim= 0 0 0 5mm, clip=true]{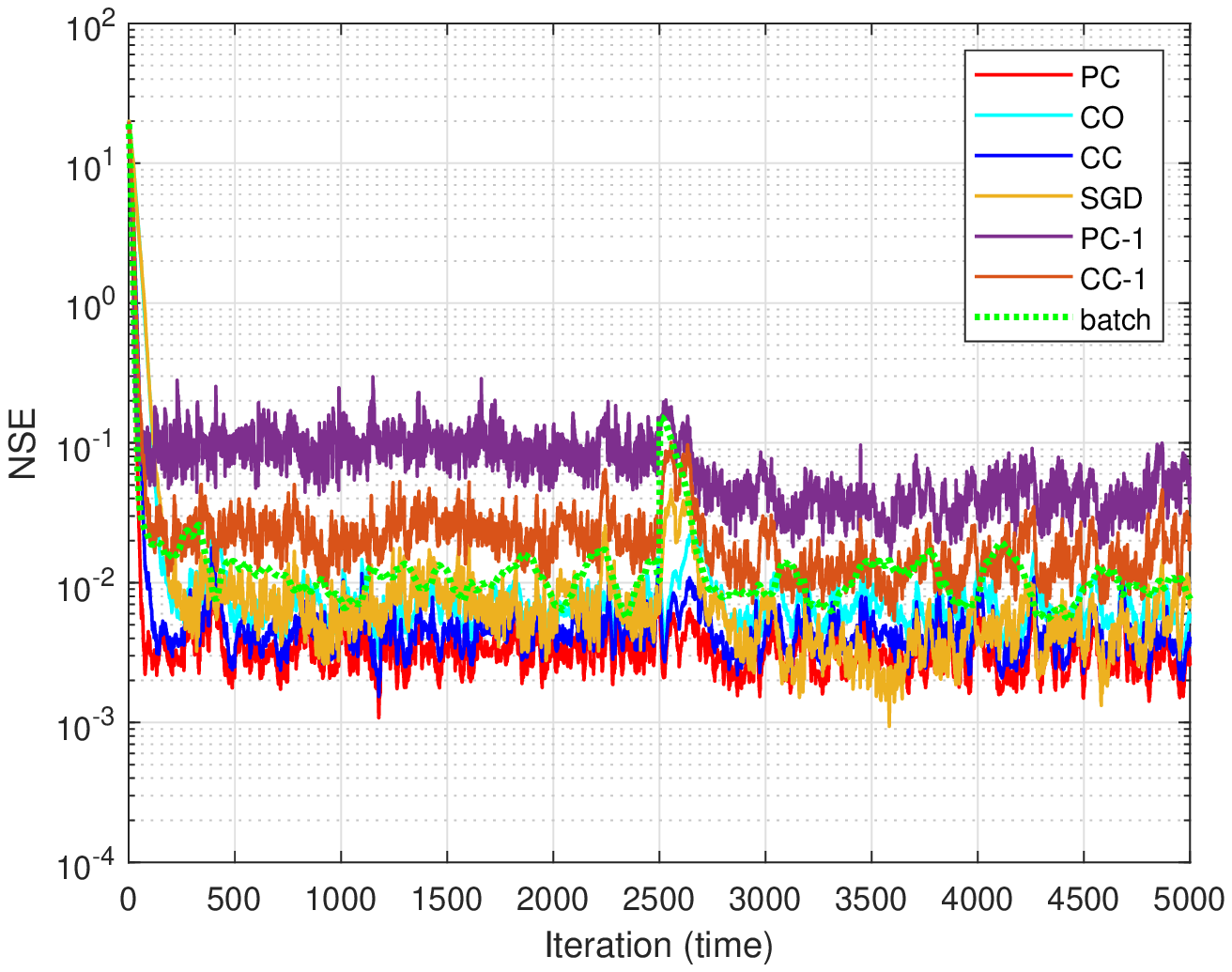}%
\vspace{-.2cm}
\caption{\textbf{TV-SBM.} $N\!=\!28$, $\alpha\!=\beta\!=\!0.1\!\times\!10^{\!-\!2}$, \newline $\lambda_1\!=\!10$, $\lambda_2\!=\!10$, $\gamma\!=\!99\!\times\!10^{\!-\!2}$}%
\label{rtop}%
\end{subfigure}%
\\ 
\begin{subfigure}{0.33\textwidth}
\centering
\captionsetup{justification=centering}
\includegraphics[width=.95\textwidth, trim= 0 0 0 5mm, clip=true]{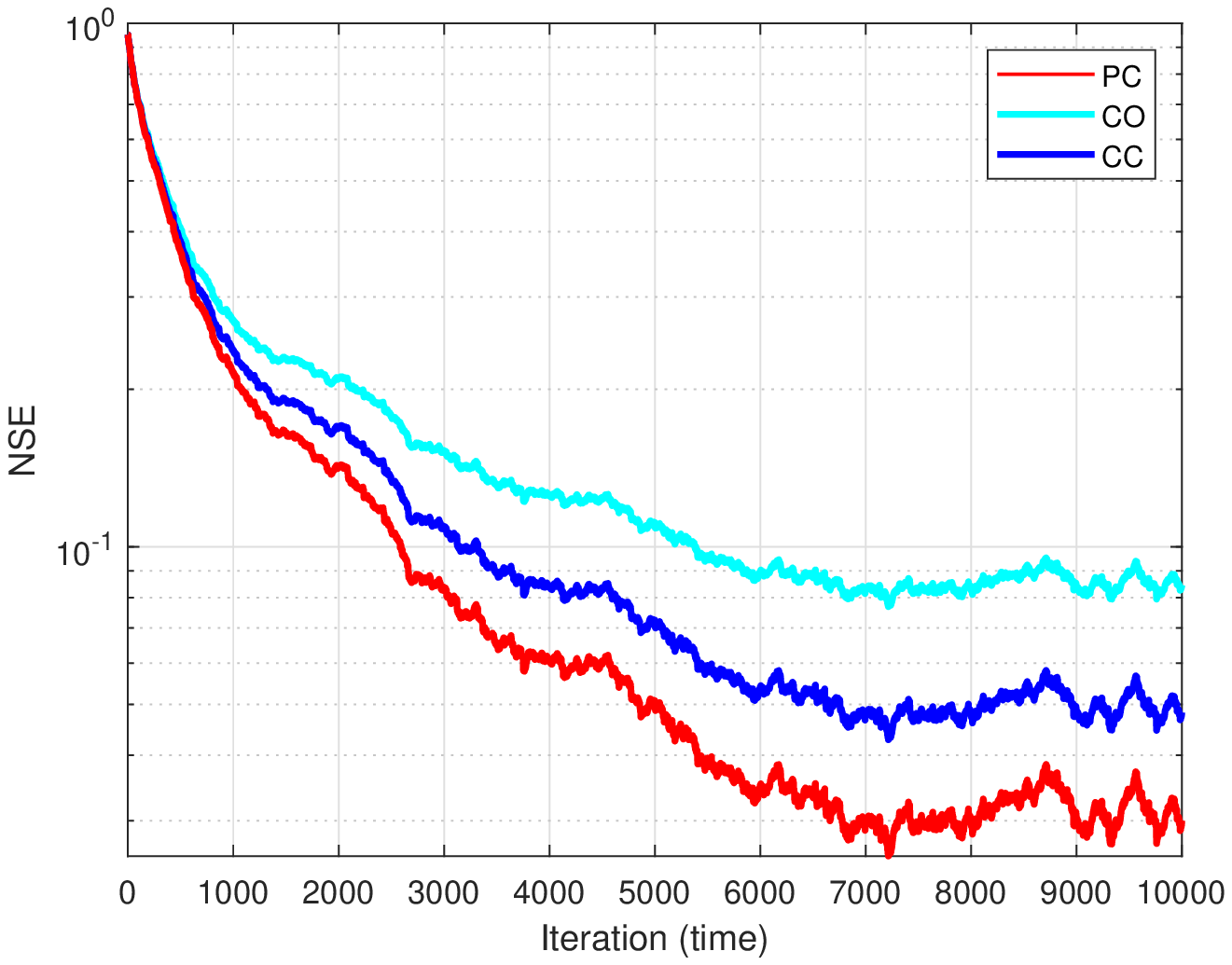}%
\vspace{-.2cm}
\caption{\textbf{TV-GGM.} $N\!=\!18$, $\alpha\!=\beta\!=\!0.1\!\times\!10^{\!-\!2} $, \newline $\gamma\!=\!99.9\!\times\!10^{\!-\!2}$ }%
\label{lbot}%
\end{subfigure}%
\begin{subfigure}{0.33\textwidth}
\centering
\captionsetup{justification=centering}
\includegraphics[width=.95\textwidth, trim= 0 0 0 5mm, clip=true]{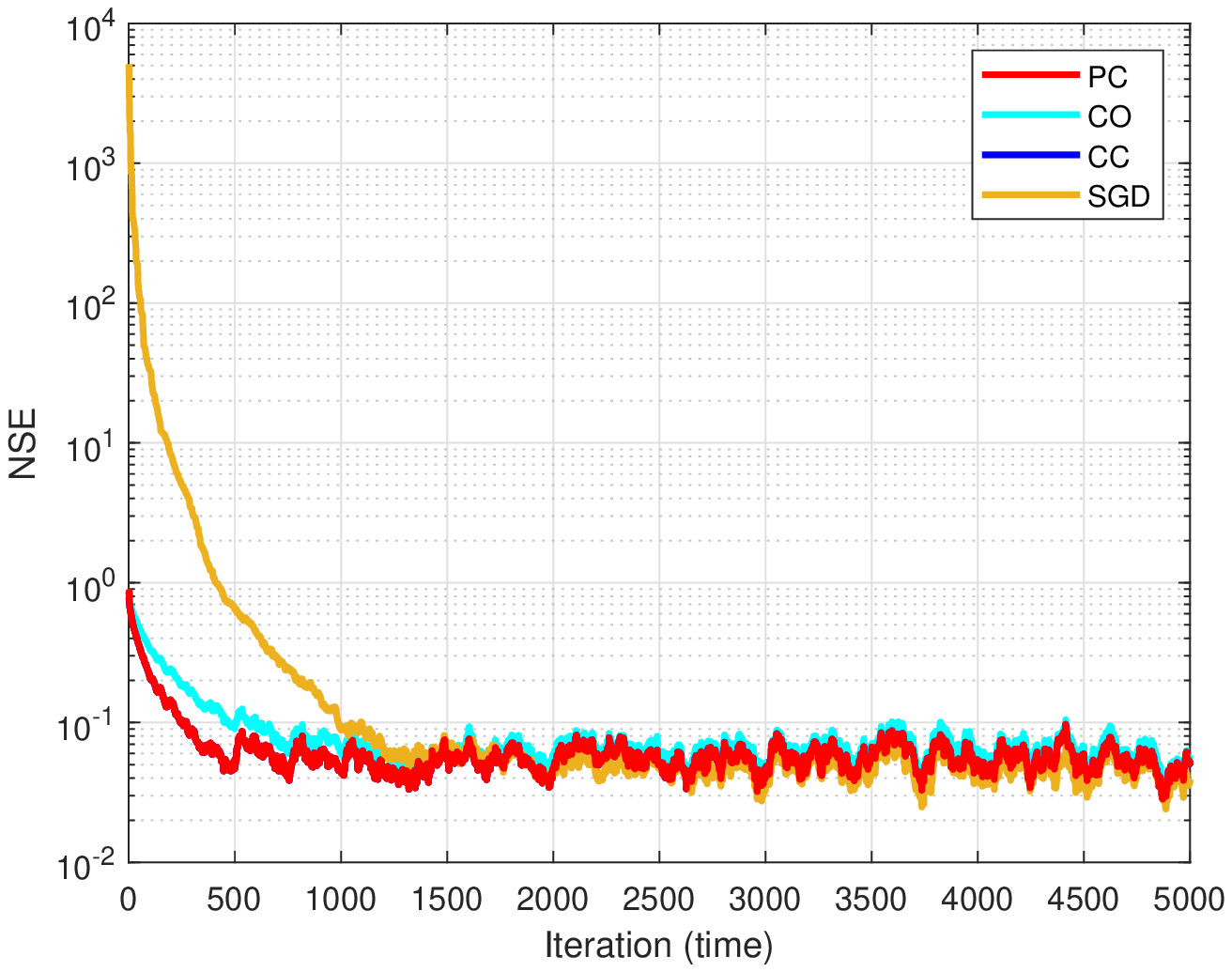}%
\vspace{-.2cm}
\caption{\textbf{TV-SEM.} $N\!=\!28$,  $\alpha\!=\beta\!=\!0.5\!\times\!10^{\!-\!2} $,  $\lambda\!=\!5e\!-\!2$, $\gamma\!=\!99\!\times\!10^{\!-\!2}$}%
\label{cbot}%
\end{subfigure}%
\begin{subfigure}{0.33\textwidth}
\centering
\captionsetup{justification=centering}
\includegraphics[width=.95\textwidth, trim= 0 0 0 5mm, clip=true]{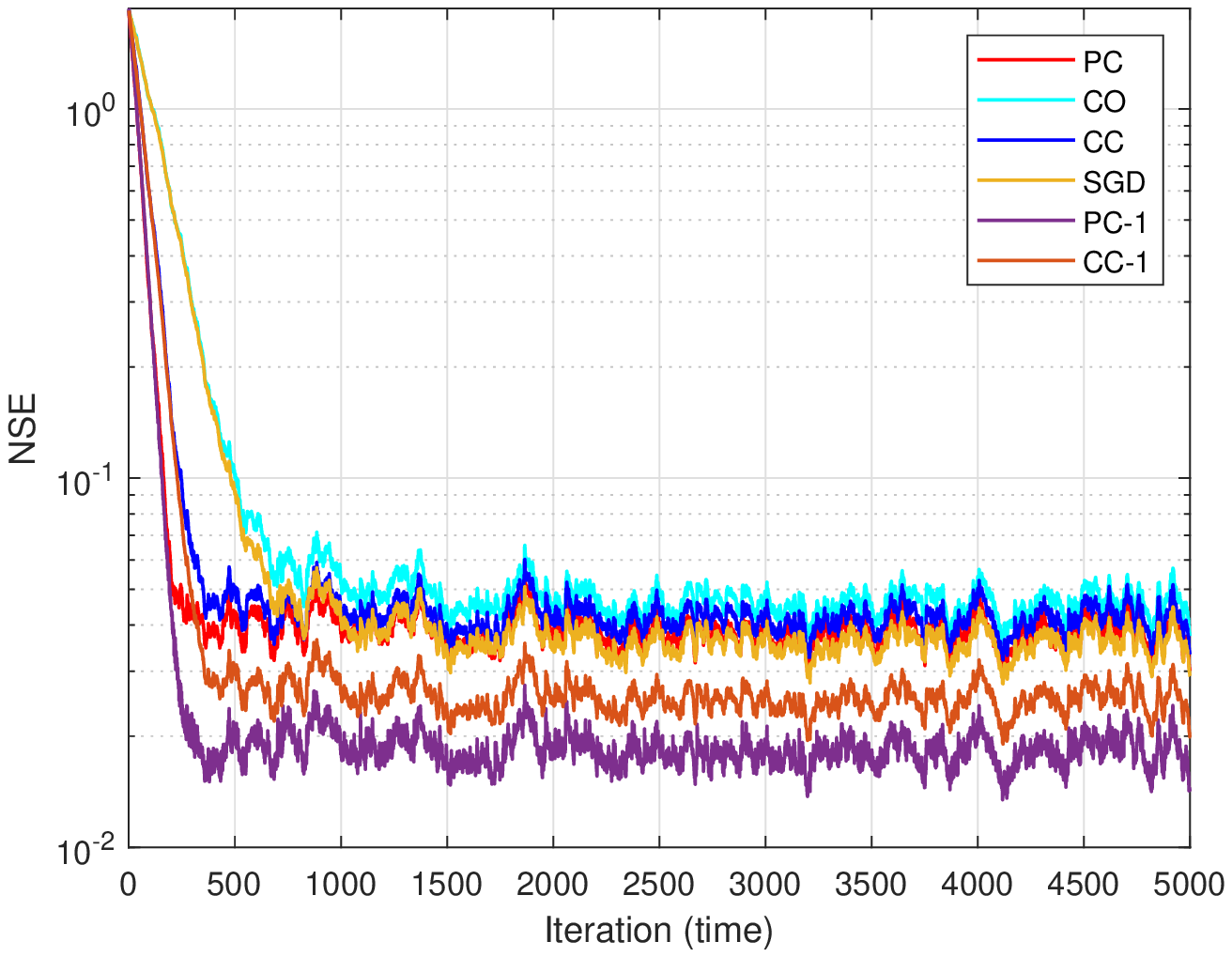}%
\vspace{-.2cm}
\caption{\textbf{TV-SBM.} $N=28$, $\alpha\!=\beta\!= \!0.1\!\times\!10^{\!-\!2} $,  $\lambda_1=1$,  $\lambda_2=10$, $\gamma=\!99\!\times\!10^{\!-\!2}$}%
\label{rbot}%
\end{subfigure}%
\vspace{-.2cm}
\caption{Normalized squared error (NSE) for the piecewise-constant (top row) and smooth (bottom row) synthetic scenarios between our online solution $\hat{\bbs}_t$ (or the other variants reported in the legend) with respect to the offline solution $\bbs_t^\star$ obtained with CVX. For the piecewise-constant scenario, it is also illustrated the NSE between the PC solution and the batch solution (green curve). Stochastic implementations are available for a subset of methods due to numerical instabilities caused by the rank-one matrix operations involved.}
\label{fig:NMSE}\vspace{-4mm}
\end{figure*}
%

\subsection{Synthetic Data}
\label{sec:synthetic}
We generate a synthetic  (seed) random graph $\bbS_0$ of $N$ nodes using the GSP toolbox~\cite{perraudin2014gspbox}. Then, edges abide two different temporal evolution patterns: \textit{i)} \textit{piecewise constant}; and  \textit{ii) }\textit{smooth} temporal variation. Finally, we generate the stream of data according to the three considered models [cf. Section~\ref{sec:network-models}] for $T$ time instants. 

\smallskip\noindent \textbf{Piecewise.} For the piecewise constant scenario, we randomly select $\lceil N/2 \rceil$ nodes of the initial graph $\bbS_0$ and double the weight of their edges, after $T/2$ samples.
Then, for $t= \{1, \ldots, T\}$ we generate each graph signal $\bbx_t$ according to the three models: $1)$ for the TV-GGM, we use $\bbx_t \sim \ccalN(\mathbf{0},\mathbf{\Sigma}_t)$, where $\mathbf{\Sigma}_t= \bbS_t^{-1}$; $2)$ for the TV-SEM we use  $\bbx_t= (\bbI - \bbS_t)^{-1}\bbe_t$ [cf.~\eqref{eq:sem-data-model}], with noise variance $\sigma_e^2= 0.5$; and $3)$ for the TV-SBM we use  $\bbx_t \sim  \ccalN(\mathbf{0}, \bbL_t^\dagger +  \sigma_e^2 \bbI_N)$ as in~\cite{dong2016learning} with $\sigma_e^2= 0.5$.

\smallskip\noindent \textbf{Smooth.} For the smooth scenario, starting from the initial graph $\bbS_0$, the evolution pattern follows an edge-dependent behavior, $S_t(i,j)= S_0(i,j) ( 1 + e^{-0.01ijt})$ for $t= \{1, \ldots, T\}$. This means that each edge follows an exponential decaying behavior, with the decaying factor depending on the edge itself. The data are generated as in the piecewise constant scenario.

\noindent For the results, we will compare  the following methods:

\begin{itemize}
    \item \textbf{Prediction-correction (PC)} \emph{red curve}: this is the proposed Algorithm~\ref{alg:complete}  specialized to one of the three models, with $P=C=1$.
    
    \item \textbf{Correction-only (CO)} \emph{cyan curve}: this is a prediction-free algorithm which only considers  the original problem~\eqref{eq:correction} and applies $C=1$ iteration of the recursion~\eqref{eq:correction-step}. It is equivalent to Algorithm~\ref{alg:complete} with $P=0, C=1$. We consider this algorithm to study the benefits of the prediction step performed by PC. 
    
    \item \textbf{Correction-correction (CC)} \emph{blue curve}: this is a prediction-free algorithm which only  considers  the original problem~\eqref{eq:correction} and applies $C=2$ iterations of the recursion~\eqref{eq:correction-step}. It is equivalent to Algorithm~\ref{alg:complete} with $P=0, C=2$. This is a more fair comparison than CO, since the number of iterations is the same as the one of PC. 
    
    \item  \textbf{Stochastic gradient descent (SGD)} \emph{ochre curve}: this is a prediction-free and memory-less version of the algorithm which only considers the last acquired graph signal. That is, the empirical covariance matrix $\empcov_t=\bbx_t\bbx_t^\top$   in~\eqref{empcov-rule} is just a rank-one update, achieved by setting $\gamma=0$. We consider this to show how much the temporal variability of the function, captured by the time-derivative of the gradient in PC, affects the algorithm's convergence.

    
    \item \textbf{Prediction-correction rank-one (PC-1)} \emph{purple curve}: this is a rank-one (stochastic) implementation of the PC algorithm; i.e.,  $\empcov_t= \bbx_t\bbx_t^\top$ for the update in~\eqref{empcov-rule}, and $P=C=1$. Notice that, differently from SGD, it also uses the time-derivative of the gradient, which in this case is the difference between two rank-one covariance matrices (thus the length of the memory is equal to one). We consider this algorithm to check the impact of the prediction step in a stochastic implementation of PC;
    
    \item \textbf{Correction-correction rank-one (CC-1)} \emph{orange curve}: this is a rank-one (stochastic) implementation of the CC algorithm; i.e., it considers $\empcov_t= \bbx_t\bbx_t^\top$ for the update in~\eqref{empcov-rule}, and $P=0, C=2$. It can be seen as a two-step SGD, and we consider it to study whether the prediction step of PC-1 is beneficial for stochastic implementations.

\end{itemize}

\noindent In addition, for the piecewise constant scenario, we also report (green curve) the NSE between the PC solution and the batch solution obtained having all the relevant data in advance, i.e., the solution that would be obtained with a static graph learning algorithm on the intervals where the graph remains constant. In general, a fair comparison  can be made within the rank-one implementations (SGD, PC-1 and CC-1)  and within the memory-aware ones (PC, CO, CC).

\smallskip\noindent \textbf{Results.} 
The NSE achieved by Algorithm~\ref{alg:complete} for the three models is shown in Fig.~\ref{fig:NMSE}, for both the piecewise constant (top row) and smooth (bottom row) scenarios.  We use fixed step sizes for all the experiments.  Notice that the only effect of the functions' hyperparameters is to shape the batch solution $\bbs_t^\star$ (and hence the time-varying trajectory $\hat{\bbs}_t$ at convergence).  Thus, we run Algorithm~\ref{alg:complete} with different hyperparameters\footnote{The search space intervals for the hyperparameters are the following: $\alpha, \beta \in (0.01, 1) \times 10^{-2}$, $\lambda \in (0.005, 5)$, $\lambda_1, \lambda_2 \in (1,10)$ ,  $ \gamma \in \{97,99, 99.9\} \times 10^{-2}$.} and manually select them by ensuring that the trivial and complete graphs are excluded; the selected ones are displayed, together with the other algorithm's parameters, in the captions of Fig.~\ref{fig:NMSE}.

\smallskip\noindent\textit{GGM.} Fig.~\ref{ltop} and Fig.~\ref{lbot} show the results for the piecewise constant and smooth scenarios, respectively. In both scenarios, the PC solution converges to the optimal offline counterpart and, for the piecewise constant, also to the batch solution(s). This demonstrates the adaptive nature of Algorithm~\ref{alg:complete} to react to changes in the data statistics. While for the piecewise constant scenario PC and CC offer the same convergence speed (which is expected, as explained in \textit{``Does prediction help?''}), for the smooth scenario, the PC algorithm exhibits a faster convergence with respect to the prediction-free competitors CO and CC. This is because the temporal variability of the function (and of its gradient) is captured by the prediction step and exploited to fasten the convergence. 

%
%
\smallskip\noindent\textit{SEM.} Similar considerations hold for the TV-SEM, whose results are illustrated in Fig.~\ref{ctop} and Fig.~\ref{cbot}. In both scenarios, PC and CC offer the same convergence rate (which also converge to the batch solution for the constant scenario), faster than a CO and SGD implementation. Interestingly, after the triggering event at $T/2$, SGD can track the optimal solution faster than CO with performances similar to PC and CC. A possible justification may be the memory-less nature of SGD, i.e., it only considers the last sample for the gradient evaluation, thus discarding past data. This  renders the SGD more reactive to adapt to sudden changes of the data statistics compared to the memory-aware alternatives, which however exhibit similar performances thanks to the extra iteration they can benefit.
%
%

\smallskip\noindent\textit{SBM.} Finally, the TV-SBM results are shown in Fig.~\ref{rtop} and Fig.~\ref{rbot}. Also in this case, the PC solution converges to the offline counterpart for the two scenarios and faster than the prediction-free versions of the algorithm CC and CO. In particular, while in the piecewise constant scenario  PC converges faster than CC and the rank-one implementations, in the smooth scenario the rank-one implementations exhibit faster convergent behavior with respect to the non-stochastic implementations. Similar to what has been said for the TV-SEM results, a possible reason can be the memory-aware characteristics of the non-stochastic methods; that is, while the information present in past data can be beneficial in the static scenario and thus help PC and CC  to have a more reliable estimate of the true underlying (static) covariance matrix (and of the gradient), it may slow down the process in non-stationary environments with time-varying covariance matrices as in the smooth scenario.

\smallskip\noindent \textit{Required time.} 
An important metric to consider in time-sensitive applications is the  average time per iteration. We report this information in Table~\ref{tab:time}, for the PC step and  CVX, relative to the three considered models and settings in the top row of Fig.~\ref{fig:NMSE}. 
\begin{table}[t]
    \centering
        \caption{Average time (expressed in seconds) required to compute the PC and the CVX solution at each time instant.}
    \label{tab:time}
    {\normalsize
\begin{tabular}{ p{2.5cm}||p{2.5cm}|p{2.5cm}  }
 \hline
 &PC & CVX\\
 \hline
 TV-GGM   & $0.110 \times 10^{-2}$    & $3.6$\\
 TV-SEM &   $0.824 \times 10^{-2}$  & $2.0$   \\
TV-SBM & $0.023 \times 10^{-2}$ & $3.6$\\
 \hline
\end{tabular}
}
\vspace{-.3cm}
\end{table}
\noindent Combining the information of the table and that of the plots in Fig.~\ref{fig:NMSE}, it is clear how trading off the knowledge of the optimal solution for savings in terms of time seems an excellent compromise. Each prediction-correction step requires indeed around three orders of magnitude less time than the CVX counterpart, leading to a NSE at least smaller than $10e-1$.

\smallskip\noindent\textit{Does prediction help?} Notice how in the piecewise constant scenario, the PC strategy does not seem to offer a major advantage with respect the CC strategy. Although this behavior could be hypothesized (since the setting is static), it is here empirically confirmed. To can gain more insights we look at the structure of the prediction step (e.g., \eqref{eq:prediction-ggm}), where the components playing a role in the descent direction are: the gradient $\nabla_\bbs f(\cdot)$; the Hessian $\nabla_{\bbs \bbs} f(\cdot)$; and the time-derivative of the gradient $\nabla_{t\bbs} f(\cdot)$. Since we use $P=1$, i.e., only one prediction step, the term $(\hat{\bbs}^p -\hat{\bbs}_t=\bb0)$ that multiplies the Hessian does not contribute to the descent step. The added value of the prediction step with respect to a general (correction) descent method, in this case, would be only provided by the time-gradient $\nabla_{t\bbs} f(\cdot)$ (since the gradient $\nabla_\bbs f(\cdot)$ is common to either the prediction and the correction step). In the piecewise constant scenario, however, the underlying (true) covariance matrix is time-invariant within the two stationary intervals, leading to a zero time-derivative of the gradient (cf.~\eqref{eq: time-derivative-ggm}). This means that in static scenarios, with $P=1$, the prediction step boils down to a correction step. Differently, for $P=2$, the contribution of the second-order information may speed up the convergence, as illustrated in Fig.~\ref{fig:GGM_multiP} for TV-GGM, with respect to a correction-only algorithm using $C=3$.

\begin{figure}[h!]
    \centering
    \begin{subfigure}{0.25\textwidth}%
    \includegraphics[scale=0.3]{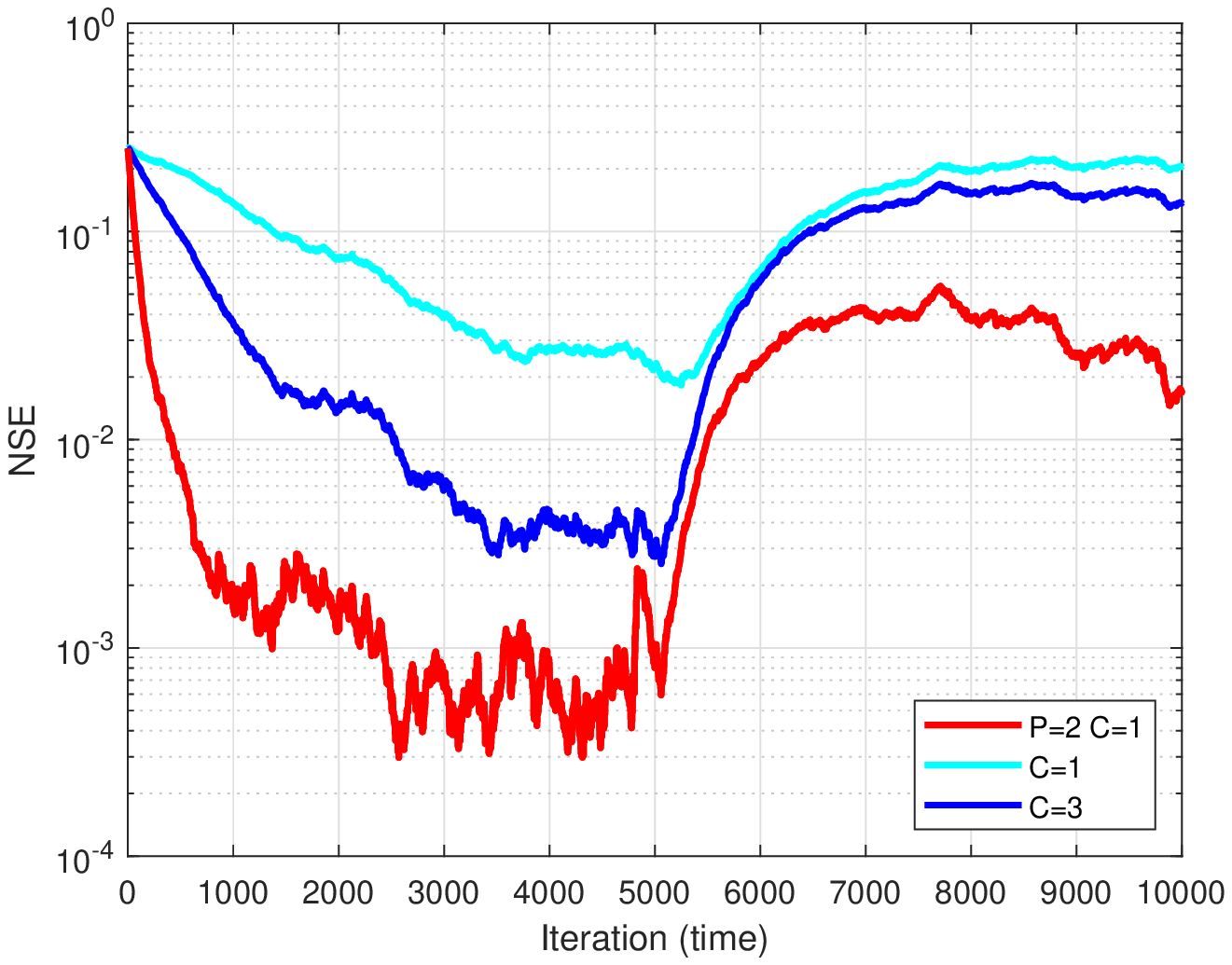}%
    \caption{}%
    \label{fig:GGM_multiP}%
    \end{subfigure}%
    \begin{subfigure}{0.25\textwidth}%
    \vspace{.2cm}
    \includegraphics[scale=0.3]{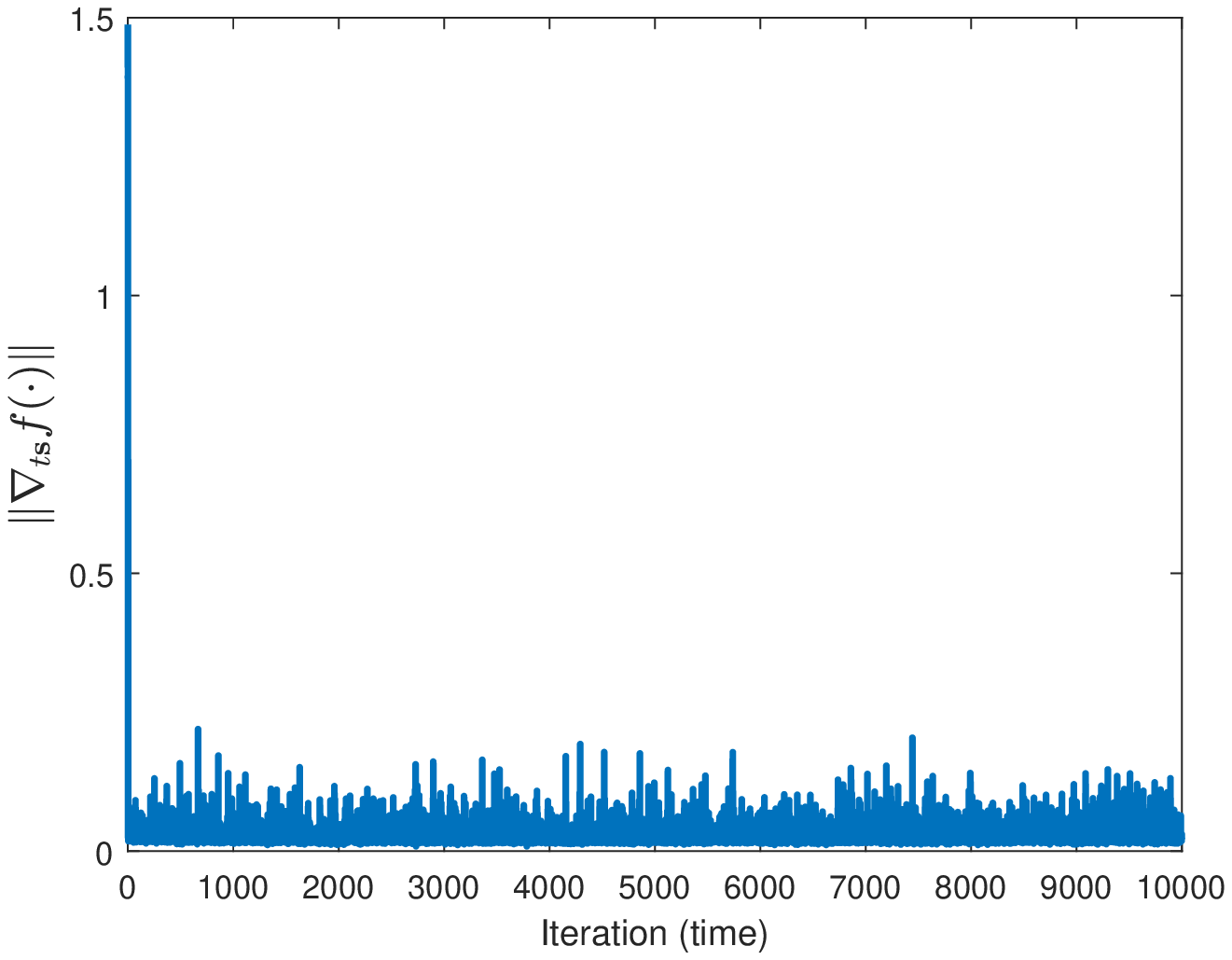}%
    \caption{}
    \label{fig:time-derivative}%
    \end{subfigure}%
    \label{fig:misc}
    \caption{(a): NSE of PC with $P=2$ and $C=1$, CO with $C=1$ and CC with $C=3$ for the piecewise constant scenario; (b) Norm of the time-derivative of the gradient as a  function of the iteration index for the smooth scenario. }
\end{figure}

In the smooth scenario, the temporal variability of the gradient captured by the time-derivative of the gradient $\nabla_{t\bbs}f(\cdot)$, plays a role in the prediction step, which can improve the convergence speed of the algorithm. The (bounded) norm of this vector over time is illustrated  in Fig.~\ref{fig:time-derivative} for the TV-GGM smooth scenario of Fig.~\ref{lbot}; this norm is linked to the constant $C_0$ introduced in Assumption~\ref{ass2} and the error in \eqref{eq:instantaneous-error}.

\smallskip\noindent All in all, the results indicate the convergence of Algorithm~\ref{alg:complete} to the optimal offline counterpart and its capability to  track it in non-stationary environments. The algorithm also converges to the batch solutions of the two stationary intervals, obtained with all the relevant data. A defining characteristic of Algorithm~\ref{alg:complete}  is its ability to naturally enforce similar solutions at each iteration, achieved with an early stopping of the descent steps, governed by the parameters $P$ and $C$. That is, the algorithm adds an implicit \textit{temporal} regularization to the problem which needs to be explicitly added when working with the entire batch of data. 

\smallskip\noindent Given these results and insights, we can outline a few principles that can be adopted when considering  Algorithm~\ref{alg:complete} for learning problems:

\begin{itemize}
    \item The prediction step with $P=1$ can be beneficial when the underlying data statistics change over time, so that the time-variability of the gradient can be exploited. Otherwise, in a complete static scenario, it coincides with a correction step.
    
    \item Increasing $P$ can improve the convergence speed when the approximated cost function is a good surrogate of the cost function in the next time instant.
    
    \item Memory-less (stochastic) variants of the algorithm can be suitable in fast-changing environments, due to their ability to discard past information and  react quickly to changes in data statistics.
    \end{itemize}

Being confident on the convergence of the algorithm, we now corroborate its performance with real data.

\subsection{Real Data}
\label{subsection:real data}
We now test the three considered algorithms on real data. Among other indicators employed in the simulations to assess the performance of the algorithm, we use the graph temporal deviation $\text{TD}(t):= \|\hat{\bbs}_t - \hat{\bbs}_{t-1}\|_2$, which measures the global variability on the edges of the graph for different time instants. To gain further insights on the network evolution over time, we consider additional metrics (such as number of edges and temporal gradient norm) and visual analysis tools which will be introduced in the application-specific scenario at hand. In this case, the hyperparameters of each function are chosen in such a way that the inferred graphs  are  neither trivial nor complete, and interpretable patterns consistent with real events are visible from the plots of the employed metrics.

\smallskip\noindent \textbf{TV-GGM for Stock Price Data Analysis.} 

\textit{Data description:} we collect historical stock (closing) prices relative to the S\&P500 Index for seven pharmaceutical companies over the time period August $12$th $2019$ to August $10$th $2021$ using~\cite{yahoo}. The collected data include the economic crisis related to the COVID-19 pandemic, followed by the vaccination campaign.  The companies of interest are Pfizer (PFE), Astrazeneca (AZN), Johnson \& Johnson (JNJ), GlaxoSmithKline (GSK), Moderna (MRNA), Novavax (NVAX) and Sanofi (SNY). Our goal is to leverage the TV-GGM in order to explore the relationships among these companies over time and observe the possible structural changes due to market instabilities.

\begin{figure*}%
\centering
\begin{subfigure}{0.33\textwidth}
\centering
\captionsetup{justification=centering}
\includegraphics[width=0.95\textwidth, height=5cm, trim =0.7cm 0.2cm 1cm 0.5cm , clip=true, keepaspectratio=true]{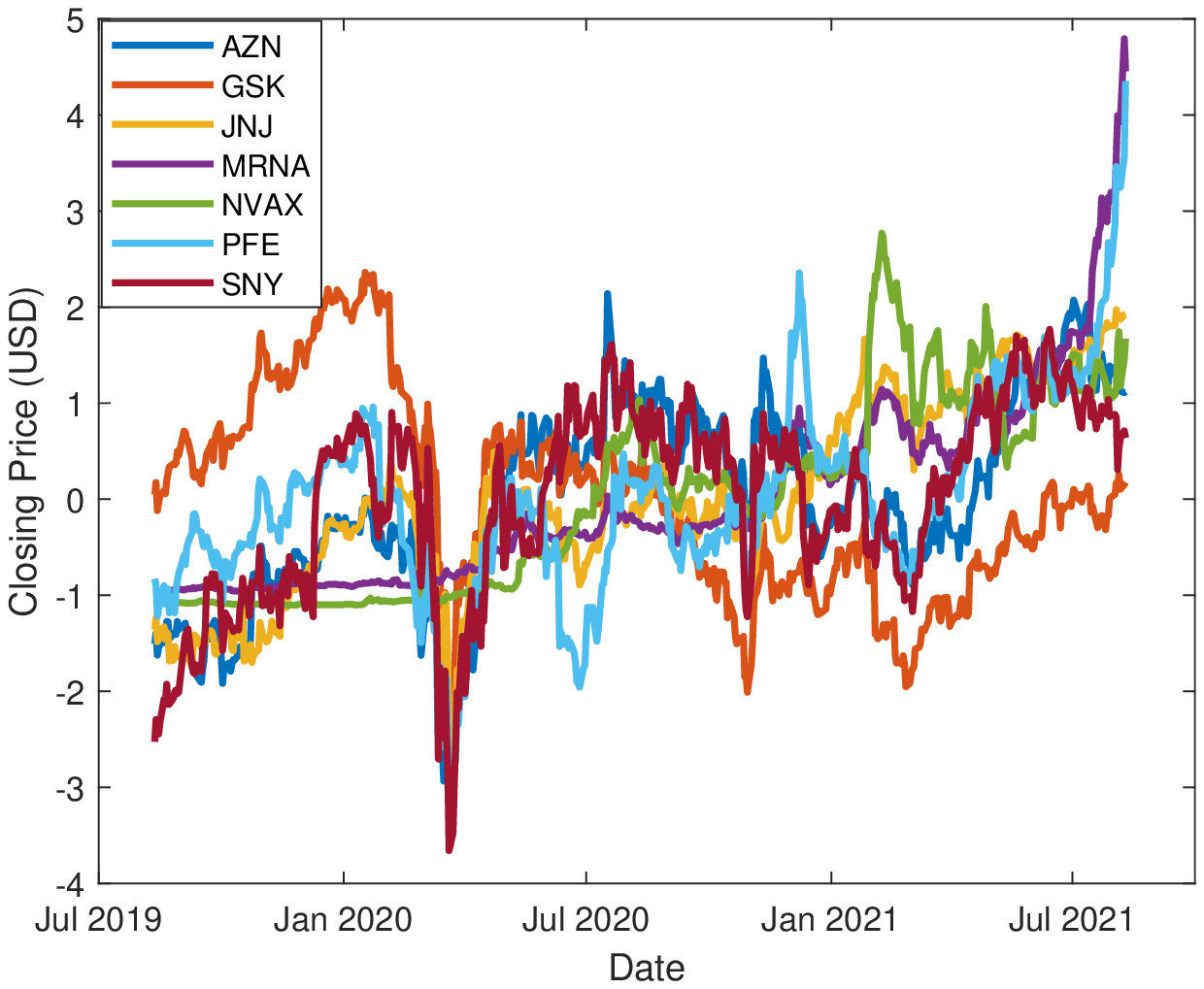}%
\vspace{-.2cm}
\caption{}%
\label{fig:logprice}%
\end{subfigure}%
\begin{subfigure}{0.33\textwidth}
\centering
\captionsetup{justification=centering}
\includegraphics[width=0.95\textwidth, trim= 1cm 0cm 1cm 0.3cm, clip=true,  keepaspectratio=true]{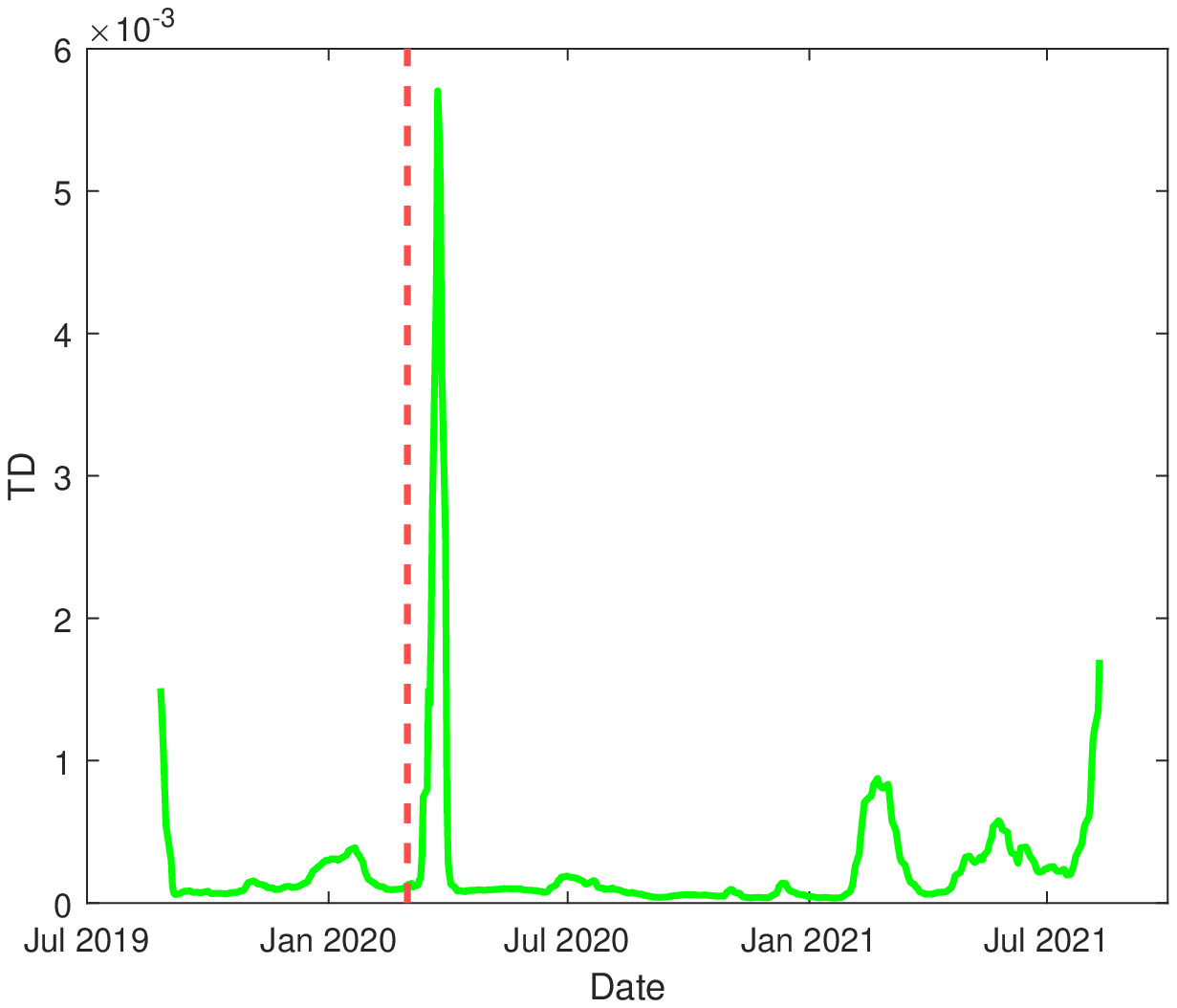}
\vspace{-.2cm}
\caption{}%
\label{fig:TD_Pharma}%
\end{subfigure}%
\begin{subfigure}{0.33\textwidth}
\centering
\captionsetup{justification=centering}
\includegraphics[width=0.95\textwidth]{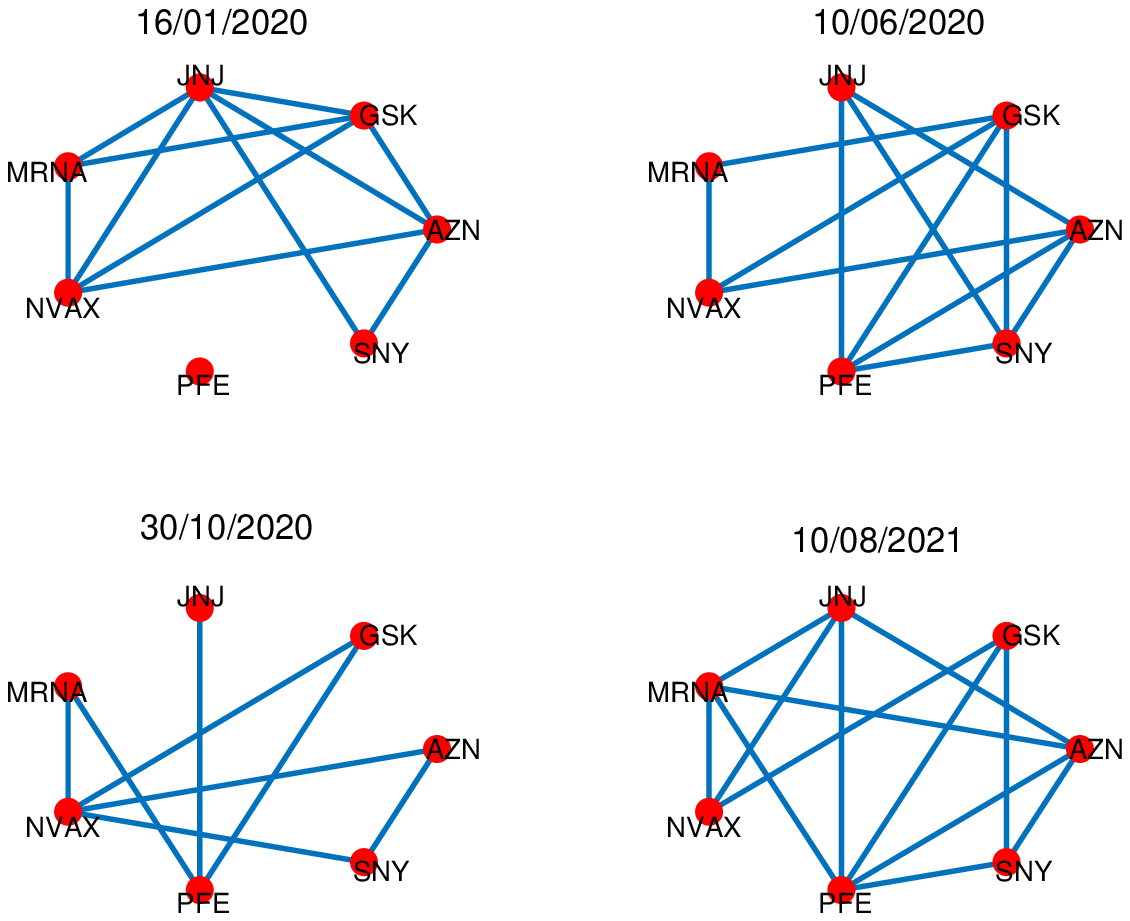}%
\vspace{-.2cm}
\caption{}%
\label{fig:graphs_pharma}%
\end{subfigure}%
\vspace{-.2cm}
\caption{(a) Standardized time series for the period  August 12th 2019 -  August 10th 2021; (b) graph temporal deviation for the stock market graph inferred with TV-GGM. The sharp peaks around March 2020 and after January 2021 happen consistently with real events; (c) inferred topologies at four different dates of interest. The absence of an edge between two nodes indicates their conditional independence.}
\label{fig:pharma_all}\vspace{-5mm}
\end{figure*}

 \textit{Results:} We consider $T = 504$ measurements (working days in August 2019 -  August 2021) as graph signals $\{\bbx_t\}$ for the $N=7$ quantities of interest, which are further standardized, i.e., each variable is centered and divided by its empirical standard deviation; see Fig.~\ref{fig:logprice} for a plot of the standardized time series. We run the TV-GGM algorithm for different values of the forgetting factor $\gamma$, and monitor the evolution of the metrics earlier introduced. The value $\gamma = 0.75$ yielded results most consistent with the data behavior. 
 
It is clear from  Fig.~\ref{fig:logprice} and the TD indicator in Fig.~\ref{fig:TD_Pharma} that around March 2020 and after January 2021 the market has changed significantly, due to the instability generated by the pandemic and by the follow-up starting vaccination campaign. The sharp peaks in Fig.~\ref{fig:TD_Pharma} around around the same period are a consequence of the dynamic inter-relationships among the companies; the inferred graph changes substantially in the two periods of interest and TD captures the market variability.

 To really enjoy the visualization potential offered by graphs as a tool, we show in Fig.~\ref{fig:graphs_pharma} snapshots of the inferred time-varying graph at four different dates of interest. Common among the four graphs is the presence of the edge connecting MRNA and NVAX, and the edge connecting AZN and SNY. The pharmaceutical companies associated to the endpoints of each of these two edges also show a similar trend in Fig.~\ref{fig:logprice}. Notice moreover that since the sparsity pattern of the precision matrix reveals conditional independence among the variables indexed by its zero entries, these graphs enable us to visually inspect such independence over time. Although the information endowed in these graphs may carry a financial significance, we leave this possible knowledge-discovery task out of this manuscript, to avoid misleading or erroneous interpretations.

\smallskip\noindent \textbf{TV-SEM for Temperature Monitoring.}

\textit{Data description:} for this experiment we consider the publicly available weather dataset\footnote{Data available at \hyperlink{https://www.met.ie/climate/available-data/historical-data}{https://www.met.ie/climate/available-data/historical-data}} provided by the Irish Meteorological Service, which contains hourly temperature (in $^{\circ}$C) data from 25 stations across Ireland. We monitor the temperature evolution over the sensor network for the period January 2016 to May 2020, and leverage the TV-SEM to infer the time-varying features of the graph learned by the algorithm.

\textit{Results:} for the analysis we consider $T=38713$ measurements as graph signals $\{\bbx_t\}$ for the $N=25$ stations under consideration, standardizing the data as done in the previous experiment;  Fig.~\ref{fig:ireland_time_series} depicts the standardized time series. It is interesting to notice the sinusoidal-like behavior of the aggregate time-series, due to higher (lower) temperature during the summer (winter) period, resulting in a smooth signal profile. 

 \begin{figure}[h]
\begin{subfigure}{0.95\columnwidth}
\centering
\includegraphics[width=\textwidth, trim= 1cm 0cm 1cm 0cm , clip= true]{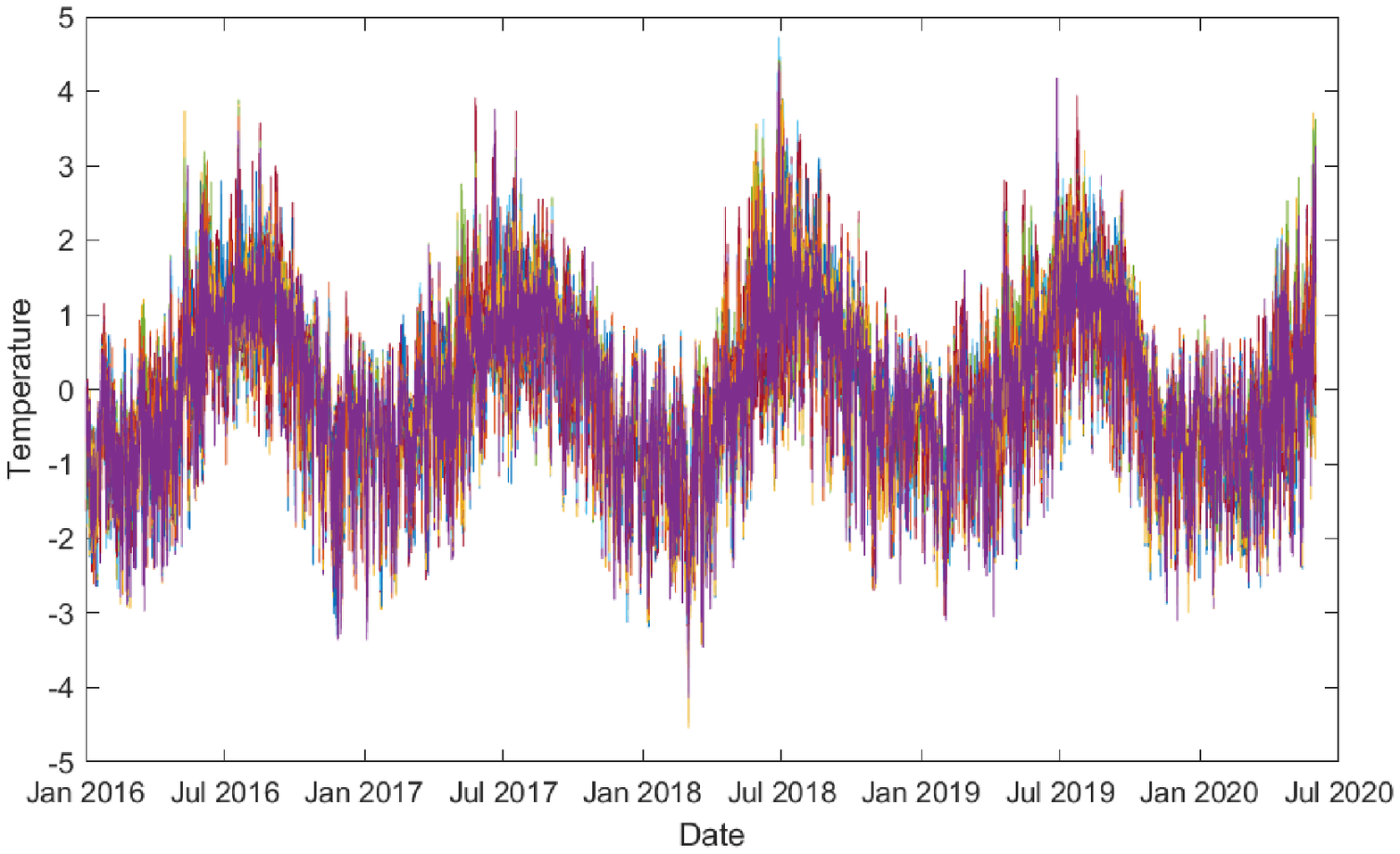}%
\vspace{-.2cm}
\caption{}%
\label{fig:ireland_time_series}%
\end{subfigure}
\begin{subfigure}{0.95\columnwidth}
\centering
\includegraphics[width=\textwidth, height=5cm, keepaspectratio=true]{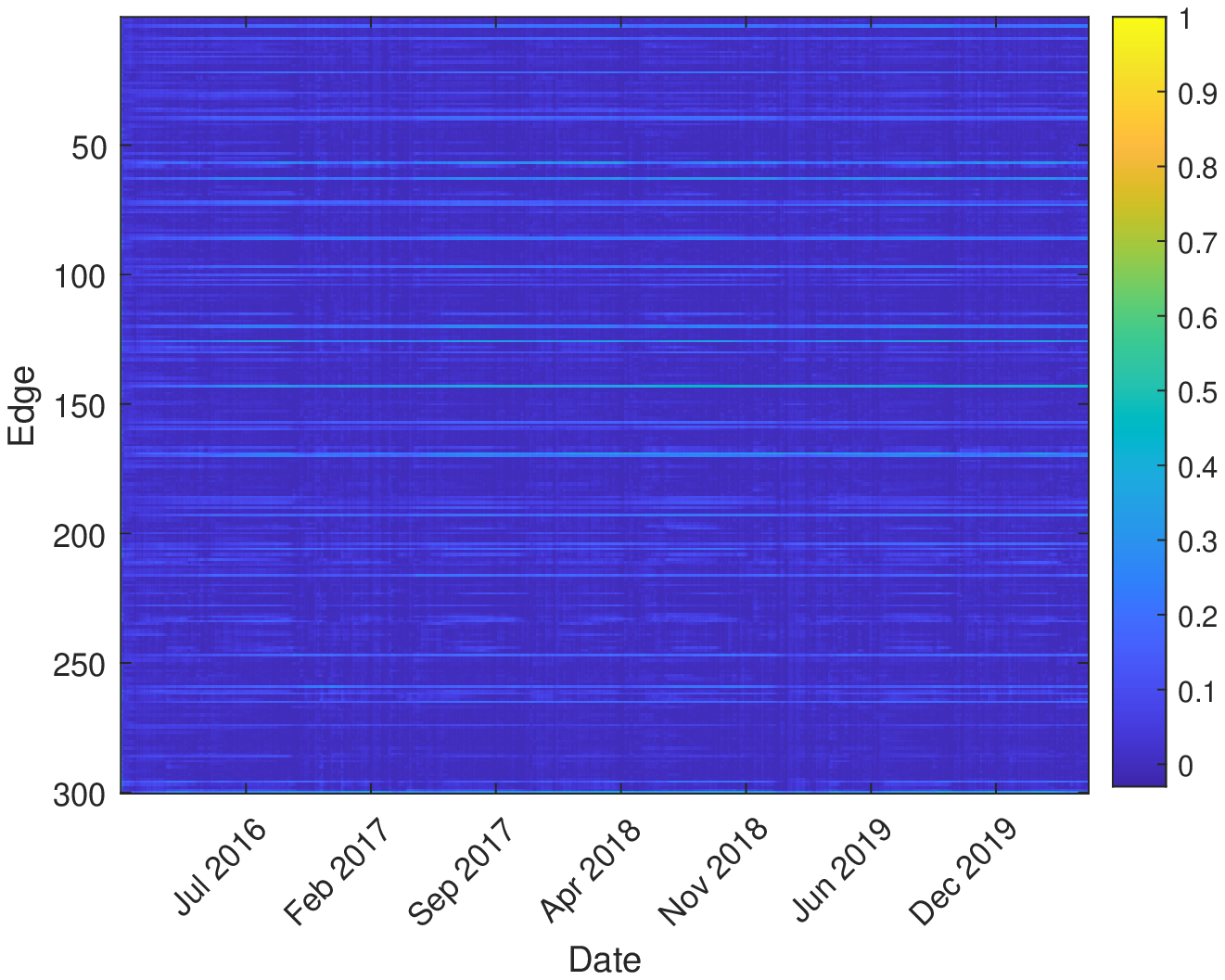}%
    \caption{}
    \label{fig:edgePattern_ireland}
\end{subfigure}
\caption{ (a) Standardized time series for the 25 Irish weather stations and (b) evolution of each edge weight over time.}
\end{figure}

 Fig.~\ref{fig:edgePattern_ireland} illustrates the  sparsity pattern of the time-varying graph and the importance of the weights at every time instant. This \textit{learn-and-show} feature offered by Algorithm~\ref{alg:complete} gives us the ability to visualize the learning behavior of the algorithm on-the-fly, a strength of low-cost iterative algorithms w.r.t. batch counterparts. From the figure (and the observed almost zero graph temporal deviation, which is not illustrated here)  a consistent temporal homogeneity  is visible, i.e., the graph does not change significantly over adjacent time instants. In other words, nodes influencing each other in a particular time instant, are likely to influence each other in other time instants. A reasonable explanation is given by the smooth and regular pattern exhibited by the time-series of Fig.~\ref{fig:ireland_time_series}, which
  is a consequence of the meteorological similarity over time, and by their high correlation coefficient.

An interesting trend arises when observing the number of edges of the graph inferred over time, shown in Fig.~\ref{fig:numberedges_ireland}. Although in adjacent time instants the number does not change abruptly, a pattern can be identified over a longer time span. In particular, during winter and summer there is a sharp increment  in the number of edges, with respect to autumn and spring where there is a significant reduction. To ease the visualization, the vertical red lines are placed in correspondence of the winter period of every year, while blue lines in correspondence of the autumn period. A possible reason for this phenomenon is given   by the reduced variability of the temperature among the stations during  summer and winter, and a higher variability during spring and autumn, leading to different graphs.

 \begin{figure*}%
  \centering
   \captionsetup{justification=centering}
  \begin{subfigure}{0.4\textwidth}
    \includegraphics[width=.95\textwidth, trim= 0 0 0 2cm]{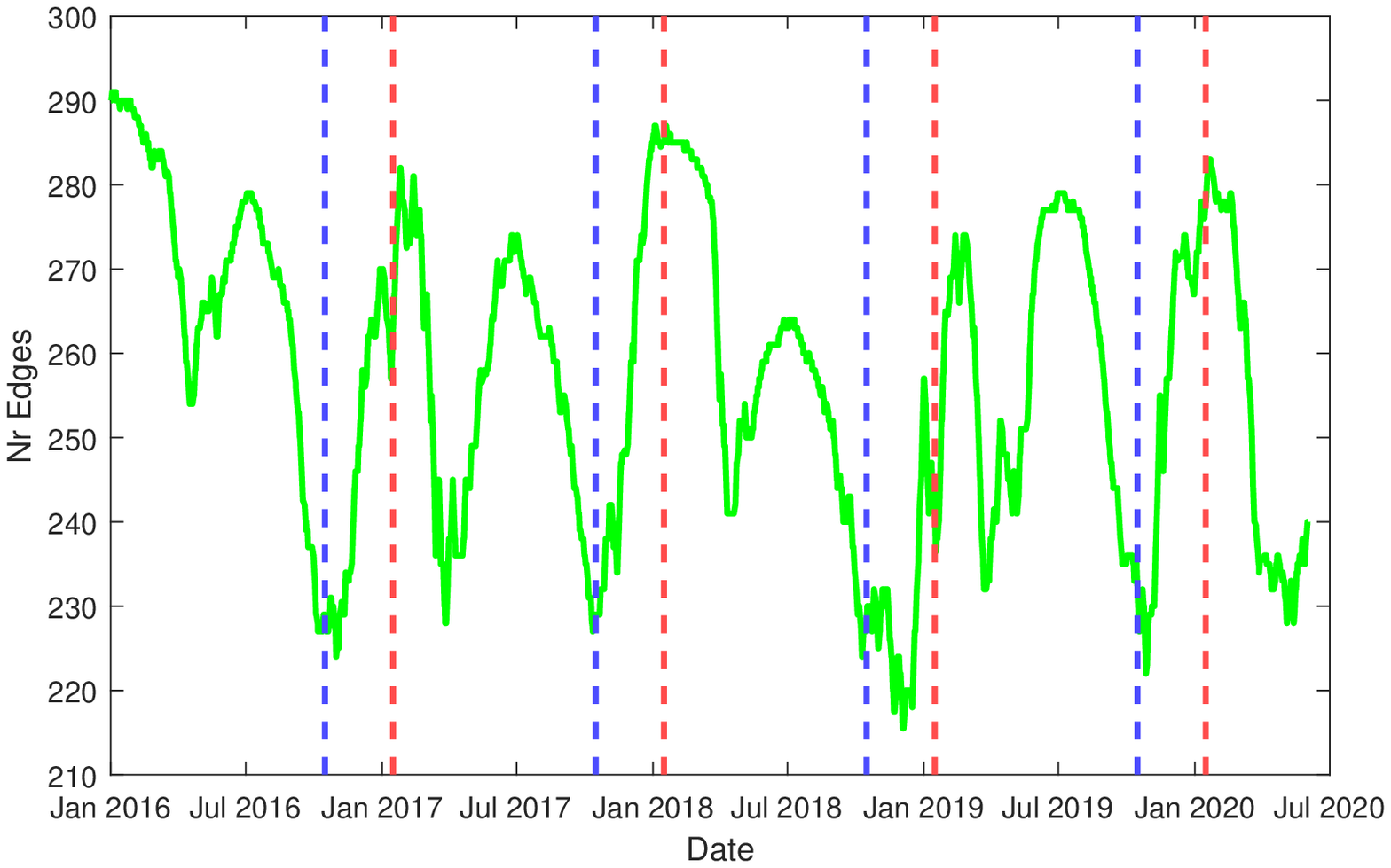}
    \caption{}
    \label{fig:numberedges_ireland}
    \end{subfigure}%
    \centering
    \begin{subfigure}{0.3\textwidth}
    \centering
    \includegraphics[width=.95\textwidth]{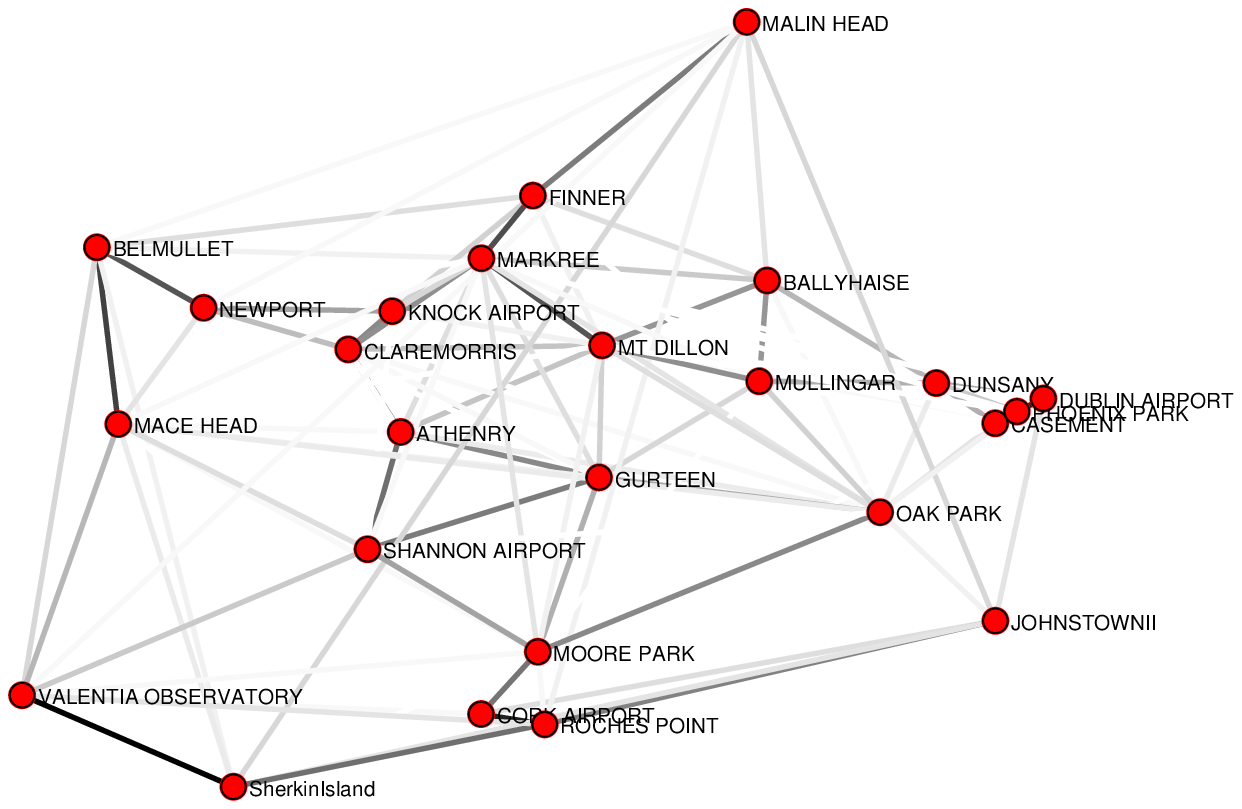}%
    \caption{Oct 2016}%
    \label{autumn_graph}%
    \end{subfigure}%
    \begin{subfigure}{0.3\textwidth}
    \centering
    \captionsetup{justification=centering}
    \includegraphics[width=.95\textwidth]{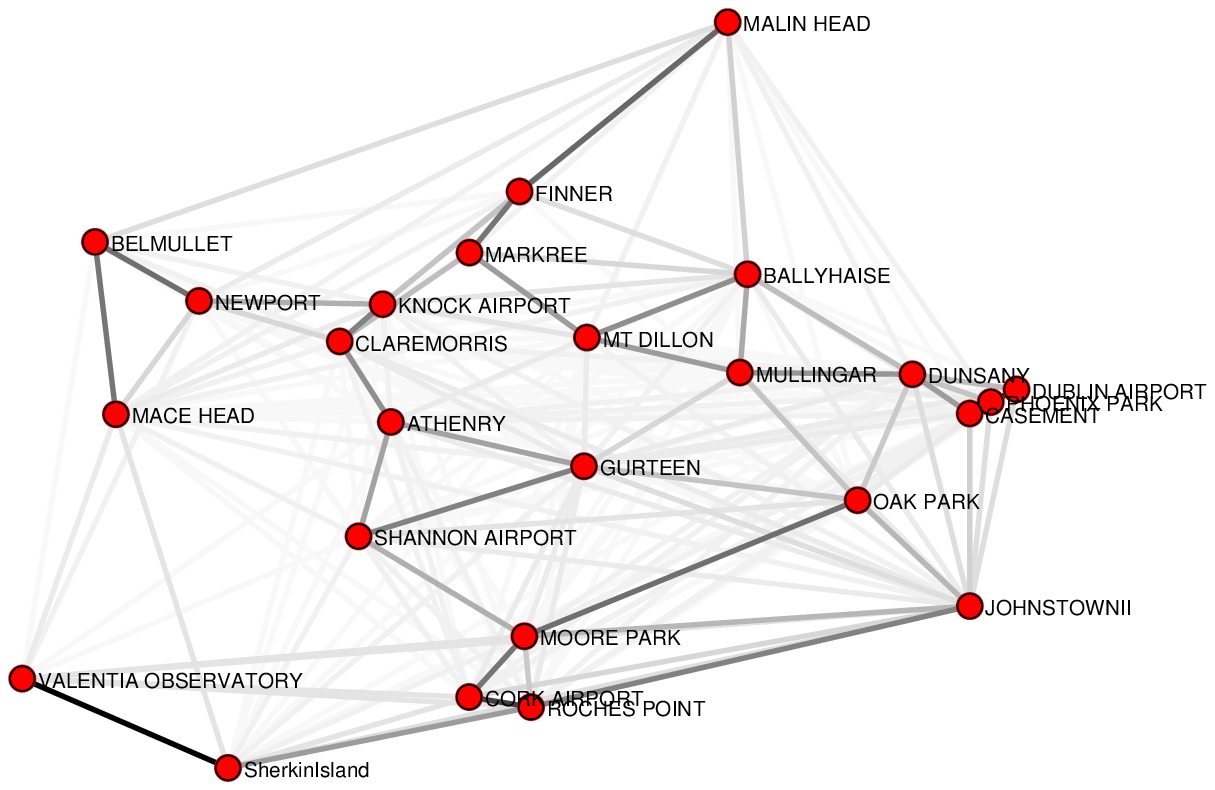}
    \caption{Jan 2017}%
    \label{winter_graph}%
    \end{subfigure}%
\caption{Ireland temperature dataset. (a) Number of edges of the inferred graph over time. The red vertical lines correspond to January 15 of each year (winter), while the blue vertical line correspond to October 15 (autumn); snapshot of the inferred time-varying graph during (b) October 2016 (autumn graph) and (c) January 2017 (winter graph). Notice how stations close in space tend to be connected.}
\label{fig:ireland_dataset}\vspace{-5mm}
\end{figure*}

For the sake of visualization, we also report the inferred graphs for October 2016 (autumn) and January 2017 (winter). In line with our previous comments regarding Fig.~\ref{fig:numberedges_ireland}, a  lower number of edges is visible in the autumn graph with respect to the winter graph; in particular, edges present in the autumn graph are also present in the winter one. Finally, notice how stations close in space tend to be connected, thus showing how stations close to each other have a greater influence  with respect to stations farther away in space.

\smallskip\noindent \textbf{TV-SBM for Epileptic Seizure Analysis.}

\textit{Data description:} we use electrocorticography (ECoG) time series collected during an epilepsy study at the University of California, San Francisco (UCSF) Epilepsy Center, where an 8 × 8 grid of electrodes was implanted on the cortical brain's surface of a 39-year-old  woman with medically refractory complex partial seizure~\cite{kramer2008emergent}. The grid was supplemented by two strips of six electrodes: one deeper implanted over the left suborbital frontal lobe and the other over the left hippocampal region, thus forming a network of $76$ electrodes, all measuring the voltage level in proximity of the electrode, which is an evidence of the local brain activity. The sampling rate is $400$ Hz and the measured time series contains the 10 seconds interval preceding the seizure  (pre-ictal interval) and the 10 seconds interval after the start of the seizure  (ictal interval). Our goal is to leverage the TV-SBM in order to explore the dynamics among different brain areas at the seizure onset.

\textit{Results:} for our analysis we consider $T=3200$ time instants as graphs signals $\{\bbx_t\}$ for the $N=76$ electrodes, which are further filtered (over the temporal dimension) at $\{60, 180\}$Hz to remove the spurious power line frequencies, and standardized as explained in the previous experiments.

Fig.~\ref{fig:TD_epilepsy} shows the graph temporal deviation, where we observe an increasing and protracted  variability of the TD shortly after the seizure onset (red vertical line), proving TD to serve as an indicator of network alteration suitable for time-varying scenarios. To visualize the on-the-fly  learning behavior of the algorithm, in Fig.~\ref{fig:numberedges_epilepsy} we show the evolution of (a fraction\footnote{For visualization, we show $500$ random edges, since we recall that the number of total edges in an undirected graph of $N$
nodes is $N(N-1)/2$.} of) the edge weights over time. In the first half of the time-horizon, we notice the presence of stronger edges with respect to the second half, where the graph is sparser. 
We show two snapshots of the time-varying graph in Fig.~\ref{fig:graphEpilepsy}, for the time instants $1500$ (pre-ictal) and $1800$ (ictal), where we also report the closeness centrality of each node, which expresses how ``close'' a node is to all other nodes in the network (calculated as the average of the shortest path length from the node to every other node in the network). During the ictal interval, the graph tends to be more disconnected and its nodes to have a lower closeness centrality value, especially in the lower part of the graph. In addition, we observe how the number of (strong) edges and the closeness centrality value drop in the ictal graph, especially in the lower part of the graph. This is consistent with the findings in \cite{kramer2008emergent} and indicates that, on average, signals in the pre-ictal interval behave more similar to each other as opposed to the signals in the ictal interval.

\begin{figure}[h!]
\centering
\includegraphics[width=0.95\columnwidth, height= 4cm, keepaspectratio=true]{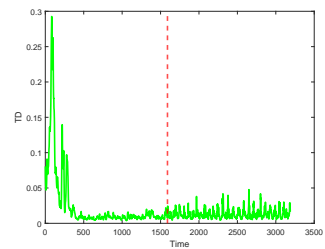}%
\vspace{-.1cm}
\caption{Graph temporal deviation for the epilepsy study. The red line indicates the seizure onset. During the ictal interval, a higher temporal deviation can be observed, indicating that the inferred graph is changing substantially.}%
\label{fig:TD_epilepsy}
\vspace{-5mm}
\end{figure}

 \begin{figure*}%
  \centering
   \captionsetup{justification=centering}
  \begin{subfigure}{0.33\textwidth}
    \includegraphics[width=.95\textwidth]{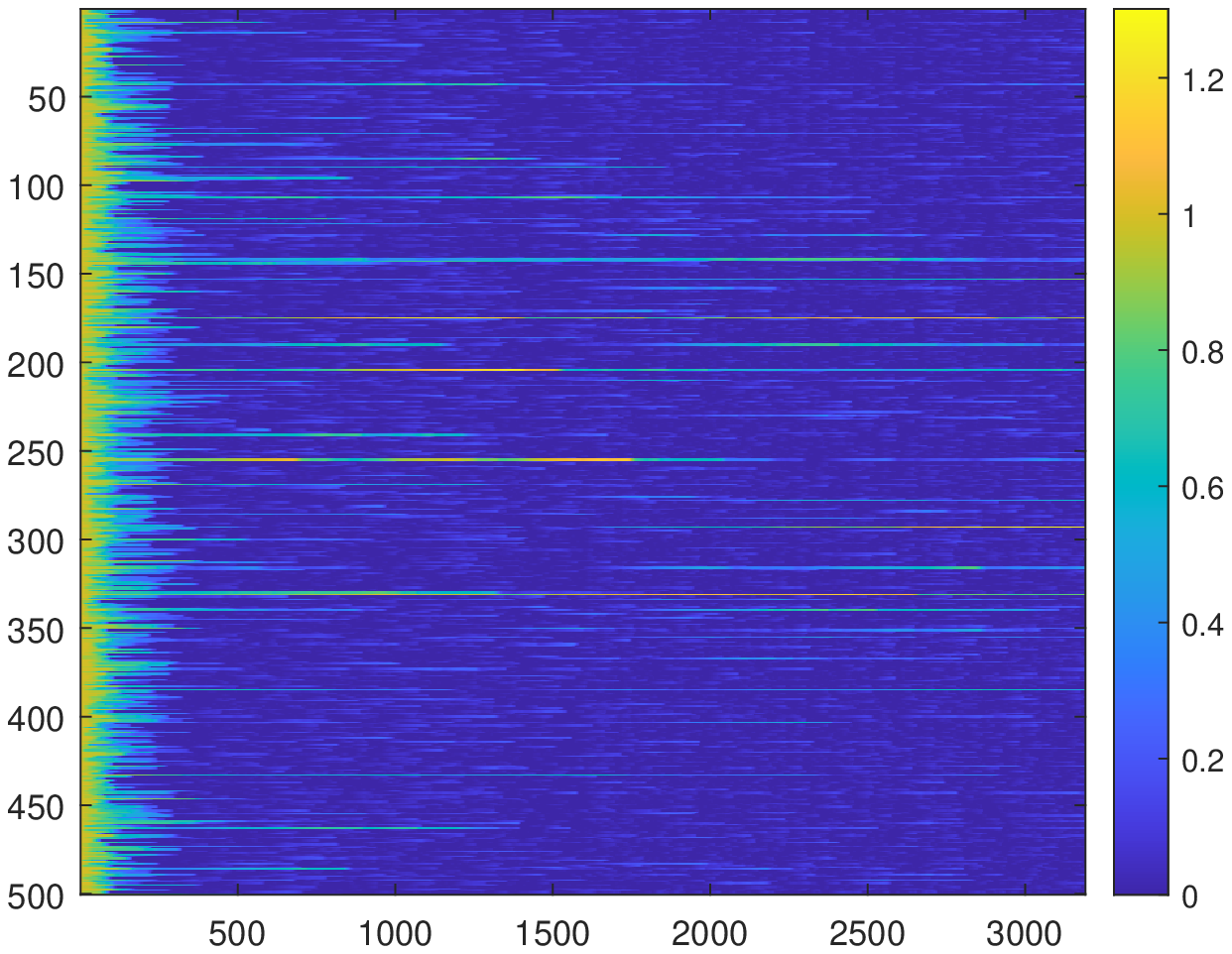}
    \caption{}
    \label{fig:numberedges_epilepsy}
    \end{subfigure}%
    \centering
    \begin{subfigure}{0.7\textwidth}
    \centering
    \includegraphics[scale=0.5, trim= 1cm 1cm 1cm 0cm, clip=true]{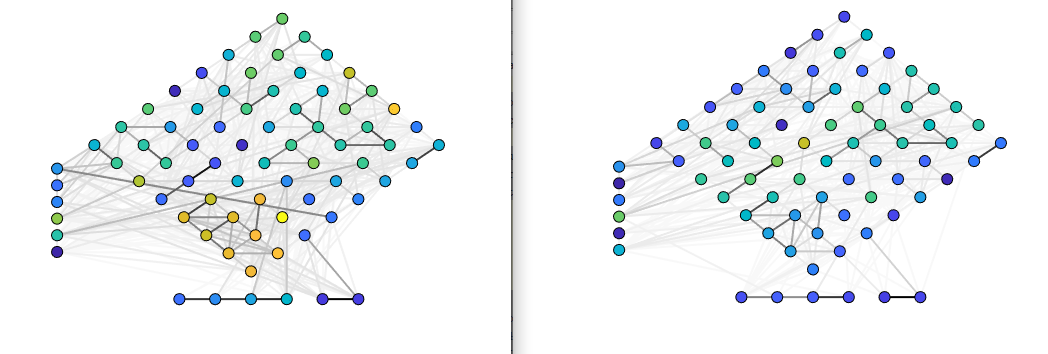}%
    \caption{}%
    \label{fig:graphEpilepsy}%
    \end{subfigure}%
\vspace{-.2cm}
\caption{Epilepsy dataset. (a) evolution of each edge weight over time;  (b)  snapshots of the inferred time-varying graph at time instant 1500 and 1800. The color of an edge indicates its weight, with darker colors indicating higher weights, while the color of a node indicates the closeness centrality of such node, with brighter colors indicating higher values of closeness centrality.}
\vspace{-5mm}
\end{figure*}

\section{Conclusion}
In this manuscript, we proposed an algorithmic template to learn time-varying graphs from streaming data. The abstract time-varying graph learning problem, where the data influence is expressed through the empirical covariance matrix, is casted as a composite optimization problem, with different terms regulating different desiderata. The framework, which works in non-stationary environments, lies upon novel iterative time-varying optimization algorithms, which on one side exhibit an implicit temporal regularization of the solution(s), and on the other side accelerate the convergence speed by taking into account the time variability. We specialize the framework to the Gaussian graphical model, the structural equation model, and the smoothness-based model, and we propose ad-hoc vectorization schemes for structured matrices central for the gradient computations which also ease storage requirements. The proposed approach is accompanied by theoretical performance guarantees to track the optimal time-varying solution, and is further validated with synthetic numerical results. Finally, we  learn time-varying graphs in the context of stock market, temperature monitoring, and epileptic seizures analysis. The current line of work can be enriched by specializing the framework to other static graph learning methods present in literature, possibly considering directed graphs, by implementing distributed versions of the optimization algorithms, and by applying the developed models in other real-world applications.

\appendices

\section{}
\label{sec:app-a}
Consider the multi-valued function $\ccalT: \reals^N \to \reals^N$, which we will refer to as \textit{operator}. Here, we briefly review some operator theory concepts used in this manuscript; see \cite{bauschke2011convex}.

\noindent \textbf{Projection operator.} Given a point $\bbx \in \mathbb{R}^N$, we define projection  of $\bbx$ onto the convex set $\ccalC \subseteq \mathbb{R}^N$ as:
\begin{align}
    \mathbb{P}_{\ccalC}(\bbx):=  \argmin_{\bbz \in \ccalC} \frac{1}{2} \|\bbz - \bbx\|_2
\end{align}

\noindent \textbf{Proximal operator.} Consider the convex function $g: \mathbb{R}^N \to \mathbb{R}$. We define  the proximal operator of $g(\cdot)$, with penalty parameter $\rho > 0$, as:
\begin{align}
    \prox_{g,\rho}(\bbx):= \argmin_\bbz \{g(\bbz) + \frac{1}{2 \rho} \|\bbz -\bbx\|_2^2\}
\end{align}
For some functions, the proximal operator admits a closed form solution \cite[Ch.~6]{beck2017first}. In particular:
\begin{itemize}
    \item if $g(\bbx)= \iota_{\ccalC}(\bbx)$ then $\prox_g(\bbx)= \mathbb{P}_\ccalC(\bbx)$, i.e., it is the projection of $\bbx$ onto the convex set $\ccalC$.
    \item if $g(\bbx)= \lambda \|\bbx\|_1$ then $\prox_g(\bbx)= \operatorname{sign}(\bbx) \odot [\bbx - \lambda\boldsymbol{1}]_+ $, i.e., it is the soft-thresholding operator.
\end{itemize}

Consider the convex minimization problem:
\begin{align}
\label{eq:com}
    \min_\bbx f(\bbx) + g(\bbx)
\end{align}
with $f, g: \mathbb{R}^N \to \mathbb{R}$ convex. It can be shown that problem \eqref{eq:com} admits at least one solution~\cite{combettes2005signal}, which can be found by the fixed point equation:
\begin{align}
    \bbx= \prox_{g,\rho}(\bbx- \rho \nabla f(\bbx))
\end{align}

\section{}
\label{proof:bounds}
\begin{proof}[Proof of Claim 1: TV-GGM]
Recall the expression of the Hessian in \eqref{eq:hessian-ggm}, i.e., $\bbH (\bbS)= \bbD^{\top} (\bbS \otimes \bbS )^{-1} \bbD$ and that matrix $\bbS \in \ccalS$ is the precision matrix, with $\ccalS=\{\bbS \in \mathbb{S}_{++}^N \vert \xi \bbI \preceq \bbS \preceq \chi \bbI \}$. 

For the strong convexity, notice that since $\bbS \succ 0$, then also $\bbH(\bbS) \succ0$. Indeed, by exploiting the semi-orthogonality of matrix $\bbD/\sqrt{2}$, we have:
\begin{align}
     &\lambda_{\text{min}}(\bbH(\bbS)) = \min_{\|\bbx\|=1} \bbx^\top \bbD^{\top} (\bbS \otimes \bbS )^{-1} \bbD \bbx  \\ 
     \nonumber &\geq  \min_{\|\bbx\|=1} \bbx^\top \frac{\bbD^{\top}}{\sqrt{2}} (\bbS \otimes \bbS )^{-1} \frac{\bbD}{\sqrt{2}} \bbx
     =  \min_{\|\bby\|=1} \bby^\top (\bbS \otimes \bbS )^{-1} \bby
     \\\nonumber
     &= \min_{\|\bbz\|=1} \sum_{i=1}^N  \sum_{j=1}^N \frac{ z_iz_j}{\lambda_i(\bbS)\lambda_j(\bbS)} 
     \geq \frac{1}{\lambda_{\text{max}}^2(\bbS)}= 1/ \chi^2
\end{align}

For the Lipschitz continuity of the gradient,  we have
\begin{align}
    \|\bbD^\top (\bbS \otimes \bbS )^{-1}\bbD\| &\leq \|\bbD\|^2  \|(\bbS \otimes \bbS )^{-1} \|  \\
    = \nonumber 2 \|(\bbS \otimes \bbS )^{-1}\| 
    &= \nonumber 2 \|\bbS^{-1} \otimes \bbS^{-1}\| \\
    =  \nonumber 2 \sqrt{\lambda_{\text{min}}(\bbS)^{-2}}
    &= \nonumber 2/\xi
\end{align}
\end{proof}

\begin{proof}[Proof of Claim 2: TV-SEM]

Denote with $\lambda_{\text{min}}$ and $\lambda_{\text{max}}$ the smallest and highest eigenvalues for the set of empirical covariance matrices obeying the SEM model. Recall the expression of the Hessian in \eqref{eq:hessian-sem}, i.e. $\bbH (\bbS; t)= \bbQ_t$, where $\bbQ_t:=\bbD_h^\top(\empcov_t \otimes \bbI)\bbD_h$. Since $\bbD_h / \sqrt{2}$ is a semi-orthogonal matrix, we have:

\begin{align}
     &\lambda_{\text{min}}(\bbH(\bbS)) = \min_{\|\bbx\|=1} \bbx^\top \bbD_h^\top(\empcov_t \otimes \bbI)\bbD_h \bbx  \\ 
     \nonumber &\geq   \min_{\|\bby\|=1} \bby^\top (\empcov_t \otimes \bbI ) \bby = \min_{\|\bbz\|=1}\sum_{i=1}^N  \lambda_i(\empcov_t) z_i^2 \geq \lambda_{\text{min}}
\end{align}
where $\lambda_{\text{min}}$ is the smallest eigenvalue of $\empcov_t$.

For the Lipschitz continuity of the gradient, we have:
\begin{align}
    \|\bbD_h^\top(\empcov_t \otimes \bbI)\bbD_h\| &\leq 2 \|\empcov_t \otimes \bbI\| =  2\lambda_{\text{max}}
\end{align}
\end{proof}


\begin{proof}[Proof of Claim 3: TV-SBM]
For the strong convexity it suffices to notice that for $m >0$,  $f(\bbs;t) - \frac{m}{2} \|s\|^2 = \bbs^\top\bbz_t - \lambda_2 \boldsymbol{1}^\top \log (\bbK\bbs) + (\lambda_1 - \frac{m}{2}) \|\bbs\|^2$ is convex. In turn, this implies that strong convexity of $f(\cdot; t)$ is guaranteed for $0<m\leq 2\lambda_1$.

For the Lipschitz continuity of the gradient, recall that nodal degree vector $\bbd \succ 0$. Denote with $d_\text{min}$ the minimum degree of the GSO search space. Also, recall the expression of the Hessian $\bbH= \bbK^\top \Diag(\boldsymbol{1}\oslash (\bbK\bbs)^{\circ 2})\bbK$. Then:
\begin{align}
    \|\bbK^\top \Diag(\boldsymbol{1}\oslash (\bbK\bbs)^{\circ 2})\bbK\| & \! \leq \! \|\bbK\|^2 \max (\boldsymbol{1}\oslash (\bbK\bbs)^{\circ 2})) \\ \nonumber
    = \|\bbK\|^2 d_\text{min}^{-2} &= 2 (N-1) d_\text{min}^{-2},
\end{align}
where we made use of \cite[Lemma 1]{saboksayr2021online} for the bound of $\bbK$.
\end{proof}

%
%
\section{}
\label{app:complexity}
The computational (arithmetic) complexity per iteration of Algorithm~\ref{alg:complete} is dominated by the rank-one covariance matrix update in $\ccalO(N^2)$ and by the method-specific gradient computations involved in the prediction and correction steps (and eventually Hessian, if $P>1$ [cf. Section~\ref{sec:numerical-result} ``\emph{Does prediction help?}'' ]). Such method-specific computational complexities are shown next, together with a discussion on the costs for the offline counterparts.

\smallskip\noindent
\textbf{TV-GGM.}
The worst case scenario computational complexity of the gradient $ \nabla_\bbs f(\bbs; t)$ in~\eqref{eq:gradient-ggm} is $\ccalO(N^3)$, which is due to the matrix inversion. This cost might be lowered exploiting the sparsity pattern of the sparse triangular factor of $\bbS$ or, in our case, exploiting the fact that it is a small perturbation with respect to the previous iterate. The multiplication with matrix $\bbD^\top$ has a cost of $\ccalO(N^2)$, since  $\bbD \in \mathbb{R}^{N^2 \times N(N+1)/2}$ has at most two $1$'s in each column and exactly one $1$ in each row.

The worst case scenario computational complexity of the Hessian $\nabla_{\bbs \bbs} f(\bbs; t)$ in~\eqref{eq:hessian-ggm} would be $\ccalO(N^3)$. However, because the Hessian is used in a matrix-vector multiplication [cf. \eqref{eq:prediction-ggm}], its factorization leads to a cost for the prediction step of $\ccalO(N^3)$. Indeed, exploiting the Kronecker product, the Hessian can be written as $\bbD^\top(\bbS^{-1} \otimes \bbI_N)( \bbI_N \otimes \bbS^{-1})\bbD$; then, the multiplication of the Hessian for a vector simply entails the succession of four sparse matrix-vector multiplications all with a cost of $\ccalO(N^3)$.

The term $\nabla_{t \bbs} f(\bbs ; t)$ in~\eqref{eq: time-derivative-ggm} has a computational complexity of $\ccalO(N^2)$. Thus the overall computational complexity per iteration is $\ccalO(N^3)$.

\smallskip\noindent
\textbf{TV-SEM.} The overall cost is dominated by the computation of $\bbQ_t=\bbD_h^\top(\empcov_t \otimes \bbI)\bbD_h$, which is present in the gradients and the Hessian. The matrix-matrix multiplication(s) have a cost of $\ccalO(N^3)$, since  $\bbD_h \in \mathbb{R}^{N^2 \times N(N-1)/2}$ has at most two $1$'s in each column and exactly one $1$ in each row.
Thus the overall computational complexity per iteration is $\ccalO(N^3)$.

\smallskip\noindent
\textbf{TV-SBM.} Each column of $\bbK$ has exactly two non-zero entries (and each row has $N-1$ non-zero entries), thus $\bbK\bbs$ has a computational cost of $2|\ccalE|$, with $|\ccalE|$ the number of edges of the graph represented by $\bbS$ (in other words, $\|\bbs\|_0$).  The operation $\bbK^\top (\boldsymbol{1} \oslash \bbK\bbs)$ has a cost of $\ccalO(N^2)$. The computational complexity of the Hessian is $\ccalO(N^3)$, since it is the weighted sum of $N$ outer products of vectors which are $(N-1)$-sparse in the same positions.

Thus the overall computational complexity per iteration is $\ccalO(N^2)$ if $P=0,1$ and $\ccalO(N^3)$ if $P>1$.

\smallskip\noindent
\textbf{Offline.}
The computational complexity for each time instant $t$ incurred by an offline solver to solve instances of problem~\eqref{eq:time-varying}  depends by its algorithm-specific implementation closely related to the problem structure. The three problems we consider are (converted into) semidefinite programs (SDPs) and solved, in our case, by SDPT3, a Matlab implementation of infeasible primal-dual path-following algorithms, which involves the computation of second-order information. Since these computations are continuously repeated, for a fixed time instant $t$, till the algorithm convergence (say $I$ iterations), a trivial lower bound for computing the offline solution for the three considered problems is $\bbOmega(IN^3)$. To this cost must be also added the cost of other solver-specific steps which we do not explicitly consider here.

\bibliographystyle{IEEEtran}
\bibliography{refs}

\end{document}